\newcommand{\xmark}{\ding{55}}
\theoremstyle{plain}
\newtheorem{theorem}{Theorem}[section]
\newtheorem{proposition}[theorem]{Proposition}
\newtheorem{lemma}[theorem]{Lemma}
\newtheorem{corollary}[theorem]{Corollary}
\theoremstyle{definition}
\newtheorem{definition}[theorem]{Definition}
\newtheorem{assumption}[theorem]{Assumption}
\theoremstyle{remark}
\newtheorem{remark}[theorem]{Remark}
\icmltitlerunning{Step-by-Step Causality: Transparent Causal Discovery with Multi-Agent Tree-Query and Adversarial Confidence Estimation}
\begin{document}

\twocolumn[
  \icmltitle{Step-by-Step Causality: Transparent Causal Discovery with Multi-Agent Tree-Query and Adversarial Confidence Estimation}



  \icmlsetsymbol{equal}{*}

  \begin{icmlauthorlist}
    \icmlauthor{Ziyi Ding}{equal,SIGS}
    \icmlauthor{Chenfei Ye-Hao}{equal,SIGS}
    \icmlauthor{Zheyuan Wang}{sch}

    \icmlauthor{Xiao-Ping Zhang}{SIGS}

  \end{icmlauthorlist}

  \icmlaffiliation{SIGS}{Tsinghua Shenzhen International Graduate School, Tsinghua University, Shenzhen, China}
  \icmlaffiliation{sch}{Zhili College, Tsinghua University, Beijing, China}

  \icmlcorrespondingauthor{Xiao-Ping Zhang}{xpzhang@ieee.org}
  \icmlkeywords{Machine Learning, ICML}

  \vskip 0.3in
]



\printAffiliationsAndNotice{\icmlEqualContribution}

\begin{abstract}
Causal discovery aims to recover ``what causes what'', but classical constraint-based methods (e.g., PC, FCI) suffer from error propagation, and recent LLM-based causal oracles often behave as opaque, confidence-free black boxes. This paper introduces \emph{Tree-Query}, a tree-structured, multi-expert LLM framework that reduces pairwise causal discovery to a short sequence of queries about backdoor paths, (in)dependence, latent confounding, and causal direction, yielding interpretable judgments with robustness-aware confidence scores. Theoretical guarantees are provided for asymptotic identifiability of four pairwise relations. On data-free benchmarks derived from Mooij et al.\ and UCI causal graphs, Tree-Query improves structural metrics over direct LLM baselines, and a diet--weight case study illustrates confounder screening and stable, high-confidence causal conclusions. Tree-Query thus offers a principled way to obtain data-free causal priors from LLMs that can complement downstream data-driven causal discovery. Code is available at \url{https://anonymous.4open.science/r/Repo-9B3E-4F96/}.
\end{abstract}

\section{Introduction}
Causal discovery is a central task in data science: across domains such as medicine, economics, and recommendation, the goal is to understand \emph{what causes what}, not just which variables are correlated. Recovering causal relations from observational data or existing knowledge is therefore key to explaining complex systems and designing effective interventions.

Classical causal discovery algorithms such as \textbf{PC} and \textbf{FCI} incrementally construct a causal graph via long sequences of conditional independence tests.\cite{spirtesCausationPredictionSearch1993,Spirtes1991AnAF}  
This design has several well-known pain points: (i) strong chain dependency, where early independence errors propagate and amplify through later edge deletions and orientations, often corrupting the final graph;\cite{chenCausalStructuralLearning2023,klopotekTooFastCausal2018,stroblEstimatingControllingFalse2017a}  
(ii) limited guidance on \emph{which} intermediate conclusions are trustworthy; and (iii) difficulty integrating domain knowledge or targeted experiments to selectively validate or override specific decisions.

The emergence of large language models (LLMs) suggests a new way to reason about causality. A growing line of work directly treats an LLM as a causal oracle: the model is asked questions such as ``Does $X$ cause $Y$?'' or ``Is there a confounder between $X$ and $Y$?'', and its answers are used to construct causal graphs.\cite{zhangCausalGraphDiscovery2024,zannaUncoveringBiasPaths2025,longCanLargeLanguage2024,longCausalDiscoveryLanguage2023,leMultiAgentCausalDiscovery2025,jiralerspongEfficientCausalGraph2024,darvariuLargeLanguageModels2024,banIntegratingLargeLanguage2025}  
Other works embed LLMs into traditional pipelines, for example as semantic conditional independence testers inside PC-like procedures.\cite{kadziolkaCausalReasoningPieces2025,cohrsLargeLanguageModels2024}  
These approaches leverage the rich world knowledge and reasoning ability of LLMs, but introduce new pain points: outputs are prompt-sensitive and sometimes unstable, there is rarely a calibrated notion of \emph{confidence} in each causal claim, chain error accumulation from PC-like controllers persists, and latent confounders remain hard to identify explicitly. Overall, the internal reasoning remains largely opaque, limiting the credibility and interpretability of the resulting causal graphs.

To address these issues, we propose \emph{Tree-Query}, a novel tree-query-based multi-expert framework that turns LLM-based causal discovery into a transparent, stepwise, and confidence-aware reasoning process. Given a variable set $V$ and any target pair $(X_1, X_2)$, the controller runs a fixed sequence of small causal queries (backdoor existence, conditional independence, latent confounding, causal direction). Each query is answered by a panel of specialized LLM ``experts'', challenged by an Adversarial Confidence Estimator that assigns a robustness-based confidence score, and the final causal graph is assembled from these locally trusted pairwise relations.

Compared to traditional methods and existing LLM-embedded approaches, Tree-Query offers several advantages:
(i) \textbf{Transparent and interpretable}: causal discovery is decomposed into simple, standardized questions, with all queries, experts, votes, and outcomes recorded in a unified reasoning log;
(ii) \textbf{Confidence-aware}: adversarial confidence estimation explicitly tests the stability of key assertions and allows the controller to prioritize high-confidence branches;
(iii) \textbf{Latent confounding aware}: the framework can directly query latent confounders between two variables and route them to appropriate experts (graph-based, counterfactual, or domain-knowledge);
(iv) \textbf{Error-isolating}: each variable pair is processed by its own Tree-Query pipeline without shared conditioning sets, so errors remain local instead of propagating across the graph.

Overall, Tree-Query casts causal discovery as a modular pipeline with clear inputs (variable sets and pairs), explicit intermediate judgments, confidence-scored outputs, and a globally consistent causal graph built from these components.

\begin{figure*}[ht]
  \vskip 0.1in
  \begin{center}
    \centerline{\includegraphics[width=0.9\textwidth]{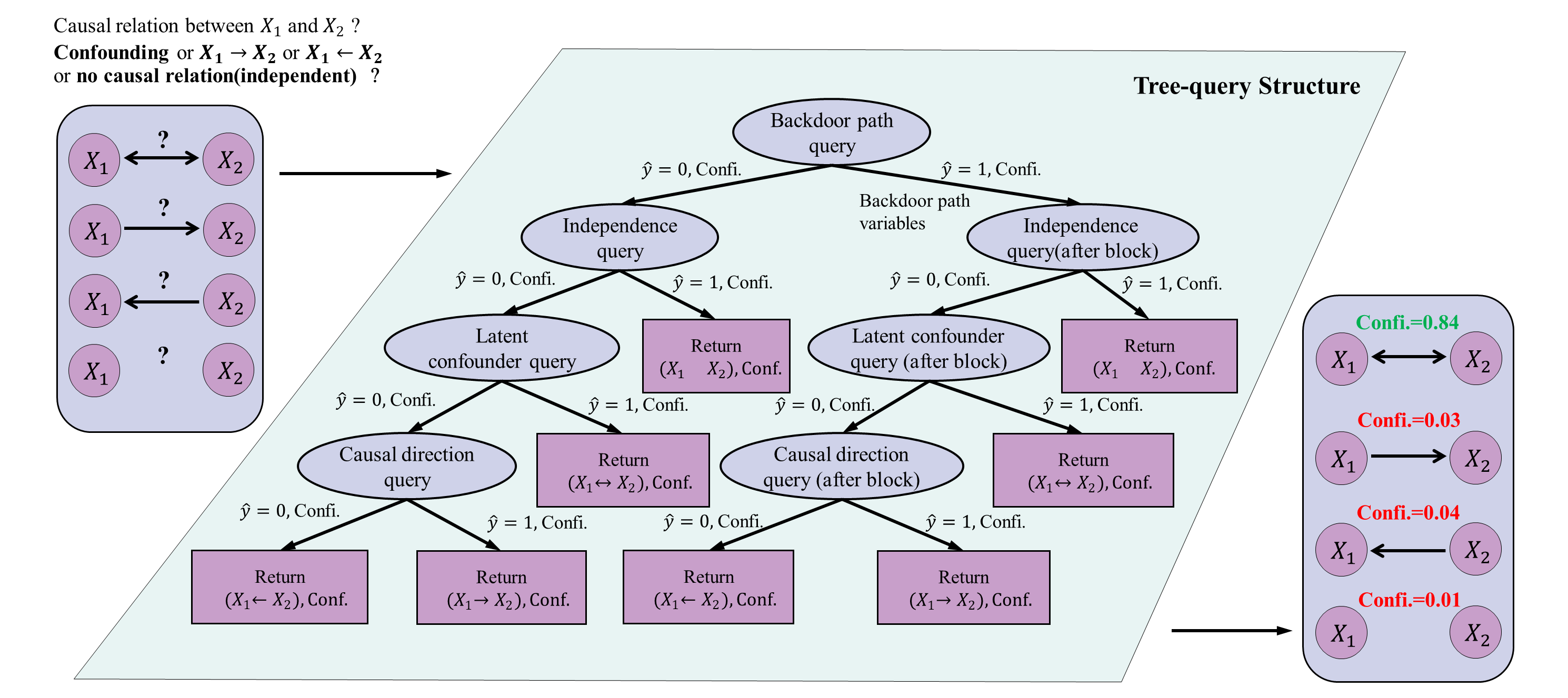}}
\caption{Tree-Query structure for inferring the causal relation between a variable pair $(X_1,X_2)$. 
Starting from an unknown relation (left), Tree-Query evaluates a fixed sequence of queries on the Tree-Query plane. 
The root issues a \texttt{backdoor\_path} query: if no backdoor path is detected ($\hat{y}=0$), the procedure follows the left branch and successively asks \texttt{independence}, \texttt{latent\_confounder}, and, if needed, \texttt{causal\_direction}; if a backdoor path is detected ($\hat{y}=1$), the corresponding variables are conceptually blocked and the same sequence of queries is executed along the ``after block'' branch on the right. 
Each internal node outputs a local binary decision $\hat{y}$ with an associated confidence (``Confi.''), and each root-to-leaf path ends in one of the candidate relations $\{X_1 \perp X_2,\; X_1 \leftrightarrow X_2,\; X_1 \rightarrow X_2,\; X_2 \rightarrow X_1\}$ shown in purple. 
Algorithm~\ref{alg:tree_checks} evaluates all queries in this structure, and Algorithm~\ref{alg:tree_query_overall} aggregates the leaf outcomes into a final relation and overall confidence (right). 
The implementation of each query node by the multi-expert and confidence-estimation modules is given in Fig.~\ref{MES_ACE}.}

    \label{TreeQuery}
  \end{center}
\end{figure*}

\begin{figure*}[ht]
  \vskip 0.1in
  \begin{center}
    \centerline{\includegraphics[width=0.8\textwidth]{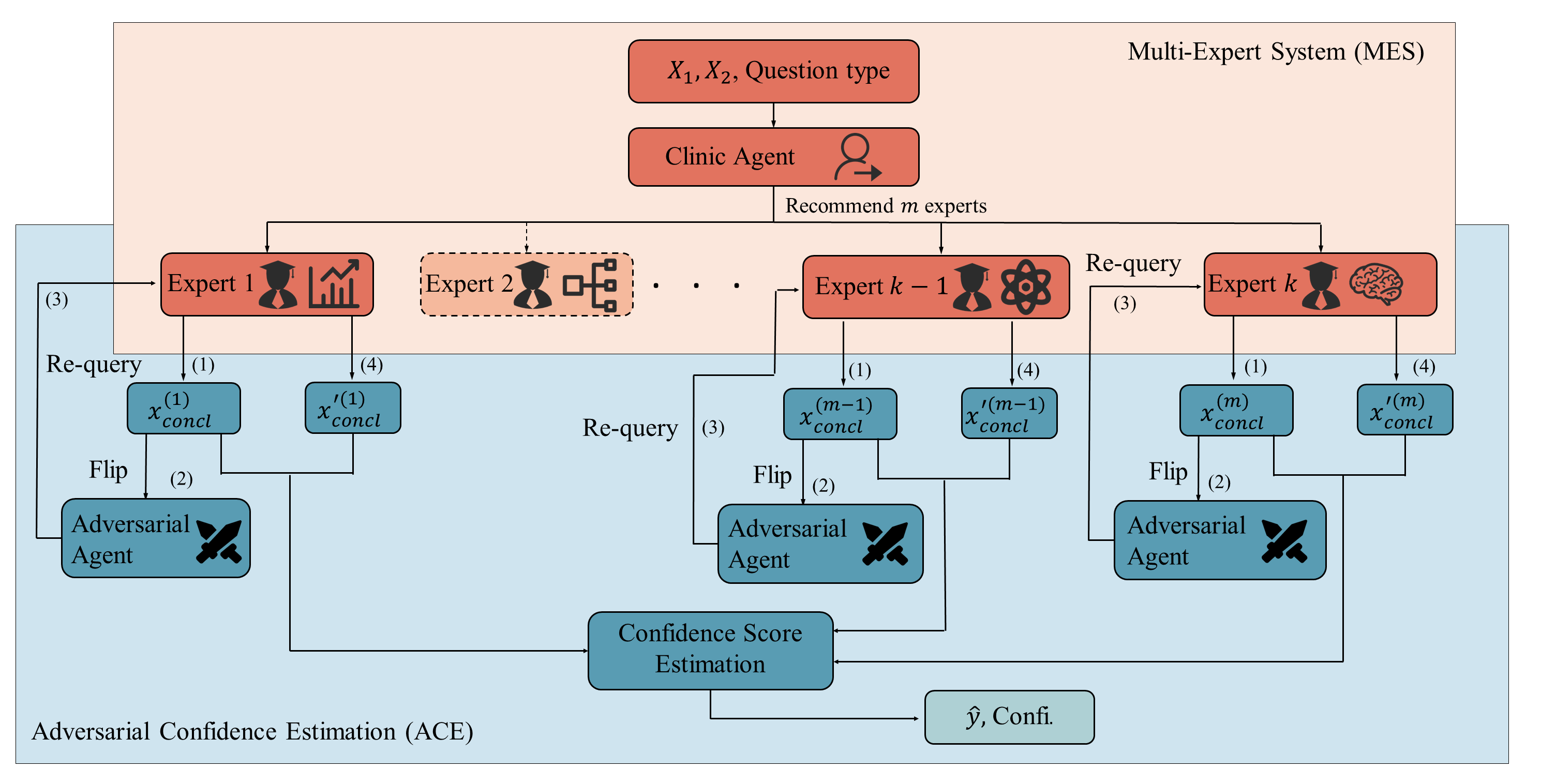}}
\caption{Multi-Expert System (MES) with Adversarial Confidence Estimation (ACE) for a single Tree-Query node. 
Given a variable pair $(X_1,X_2)$ and a query type, the clinic agent selects $m$ experts from $K$ candidates and forwards the query. 
Each selected expert returns a binary conclusion $x^{(i)}_{\text{concl}}$ (step (1)). 
An adversarial agent then flips the conclusion and generates an adversarial argument, which is used to re-query the same expert configuration (steps (2)--(3)), producing perturbed conclusions ${x'}^{(i)}_{\text{concl}}$ (step (4)). 
ACE aggregates the original and perturbed conclusions across experts and adversarial personas to obtain a majority-vote label $\hat{y}$ and a robustness-based confidence score (``Confi.''). 
This MES--ACE module implements each query node in the Tree-Query structure shown in Fig.~\ref{TreeQuery}; 
the corresponding pseudocode for the combined MES-ACE module is given in Algorithm~\ref{alg:mes_ace_combined} in Appendix~\ref{appendix:mes_ace}.}

    \label{MES_ACE}
  \end{center}
\end{figure*}

\section{Problem Setup}
In this section, we define the notation and underlying assumptions, and present a formulation of the problem being investigated.

\subsection{Notation}
Let $X = \{X_1, X_2, \dots, X_d\}$ denote the set of observed variables. We assume there exists a true underlying causal graph $G^*$ over $X$, where edges may be directed ($\rightarrow$) or bidirected ($\leftrightarrow$) to represent latent confounding. Let $\mathcal{G}$ denote the space of all such partially directed acyclic graphs (PDAGs). The joint probability distribution over $X$ is denoted by $P(X)$. We access an oracle—an ensemble of LLM experts—that provides noisy but imformative judgments on causal relations, where the error probability of each expert is bounded by a parameter $\alpha$.

The following assumptions ensure the existence of true causal graph $G^*$ and $P(X)$.  

\subsection{Assumptions}
We consider causal discovery under a structured set of assumptions combining classical causal principles with the practical reliability limits of LLM-based reasoning.

\begin{assumption}[Causal Markov Condition]
\label{a1}
The distribution $P$ factorizes according to the d-separation relationships in causal graph $G$:
\(
X_1 \perp X_2 \mid X_3 \text{ in } P \iff X_1 \text{ and } X_2 \text{ are d-separated by } X_3 \text{ in } G.
\)
where $X_i (i=1,2,3)$ are causal variables.
\end{assumption}

\begin{assumption}[Faithfulness Condition]
\label{a2}
All conditional independences in $P$ arise from d-separation in $G$, with no coincidental independences.
\end{assumption}

Assumptions~\ref{a1}--\ref{a2} are classical principles.

\begin{assumption}[Weak Causal Sufficiency]
\label{a3}
Latent confounders may exist but can be detected through observable independence patterns and represented as bidirected edges.
\end{assumption}
As the latent confounders are difficult to be causally identified, classical causal discovery algorithms often assume the non-existence of latent confounders, which is rarely true in real-world situations. We assume that latent confounders leave statistical traces that can be keenly captured by detective methods like LLMs, thereby allowing us to mark these relations as confounded rather than regarding them as completely indeterminable.

\begin{assumption}[Bounded Reliability]
\label{a4}
Each LLM expert has error probability at most $\alpha < 0.5$ in independence and direction judgments; errors are approximately independent, enabling ensemble aggregation to reduce total inference error.
\end{assumption}
This assumption allows LLM experts make mistakes. Strictly requiring $\alpha < 0.5$ means the experts work better than random guessing. Conditioned on the fixed input pair$(X_1,X_2)$, the errors made by experts are attributed to their internal machanisms, rendering them mutually independent.

Assumptions~\ref{a3}--\ref{a4} reflect that LLMs can semantically infer hidden relations but imperfectly, motivating a probabilistic reliability bound that ensures ensemble-level robustness.

\subsection{Problem Formulation}

Given a set of variables $X_1,\dots,X_d$, the ground-truth causal relation for each ordered pair $(X_i, X_j)$ with $i \neq j$ is one of
\[
R \in \{ X_i \rightarrow X_j,\; X_j \rightarrow X_i,\; X_i \leftrightarrow X_j,\; X_i \perp X_j \}.
\]
Instead of observing samples from $P(X)$, we can query an ensemble of LLM experts that provide noisy but informative binary judgments about high-level causal properties (e.g., (in)dependence, backdoor paths, latent confounding, or causal direction). Our goal is to design a querying and aggregation procedure that, for each pair $(X_i, X_j)$, uses only a finite number of such queries to output an estimated relation (with a confidence score). Collecting all pairwise decisions yields an estimated graph $\hat G$, which we evaluate by the number of correctly recovered edges or, equivalently, the Structural Hamming Distance (SHD) to the true graph $G^\star$.

\section{Methodology}

\subsection{Tree-Query Framework}
\label{sec:tree_query}

Tree-Query reduces pairwise causal discovery to a fixed sequence of simple queries. For any pair $(X_1, X_2)$ and variable set $V$, it runs four query types through a Multi-Expert System (MES) and an Adversarial Confidence Estimator (ACE), and then applies a deterministic decision rule to output one of
\(
\hat{R} \in \{X_1 \perp X_2,\; X_1 \leftrightarrow X_2,\; X_1 \rightarrow X_2,\; X_2 \rightarrow X_1\}
\)
together with a confidence score.

\paragraph{Design principles.}
Tree-Query follows two principles: (i) reduce global causal discovery to a small set of human-understandable questions, and (ii) order these questions so that reliable, easy-to-verify information is used first. Accordingly, the four query types are arranged from coarse to fine: \texttt{backdoor\_path} checks high-level structural connectivity; \texttt{independence} serves as an early-stopping test between ``no relation'' and ``potential relation''; \texttt{latent\_confounder} separates spurious dependence from genuine causal links; and \texttt{causal\_direction} is used only when a direct relation remains to be oriented. Each query corresponds to a concrete, testable claim (e.g., ``independent after adjustment''). The use of a fixed, finite sequence bounds query depth and is crucial for the asymptotic identifiability result in Theorem~\ref{thm:asym_identifiability}.

\paragraph{Query types and MES--ACE interface.}
Tree-Query employs four primitive query types: \texttt{backdoor\_path}, \texttt{independence}, \texttt{latent\_confounder}, and \texttt{causal\_direction}. Each type is implemented by the combined MES-ACE module (Algorithm~\ref{alg:mes_ace_combined} in Appendix~\ref{appendix:mes_ace}). For any query $Q$, the MES-ACE module outputs a triple
\[
(\hat{y}, c, \mathcal{R}) = \textsc{MES-ACE}(Q, X_1, X_2, V; K, m, N)
\]
where $\hat{y} \in \{0,1\}$ is the majority-voted binary label, $c \in [0,1]$ is the adversarially-tested confidence score, and $\mathcal{R}$ records the expert log. The binary label $\hat{y}$ is deterministically mapped to a relation-specific label $\textit{rel}$ depending on $Q$ (e.g., for \texttt{independence}, $\hat{y}=1$ means ``independent''). We use $(\textit{rel}, c, \mathcal{R})$ as the interface for a query of type $Q$, omitting $(K,m,N)$ for brevity.

\begin{algorithm}[t]
\caption{Tree-Query (overall controller)}
\label{alg:tree_query_overall}
\begin{algorithmic}[1]
  \STATE \textbf{Input:} variables $(X_1, X_2)$, variable set $V$, 
         MES-ACE module parameters, confidence threshold $\tau$
  \STATE \textbf{Output:} final relation $\hat{R}$, confidence $\hat{c}$
  \STATE $(\textit{have\_backdoor}, c_{\mathrm{bd}}, \textit{results}) 
         \gets \textsc{Tree-Query-Checks}(X_1, X_2, V)$
  \STATE $(\hat{R}, \hat{c}) \gets 
         \textsc{DecisionRule}(\textit{have\_backdoor}, c_{\mathrm{bd}}, \textit{results}, \tau)$
  \STATE \textbf{return} $(\hat{R}, \hat{c})$
\end{algorithmic}
\end{algorithm}

\paragraph{Overall Tree-Query procedure.}
The Tree-Query procedure separates \emph{evidence collection} from \emph{decision aggregation}. Evidence collection is handled by \textsc{Tree-Query-Checks} (Algorithm~\ref{alg:tree_checks}), which first tests for the existence of any backdoor path between $(X_1, X_2)$ and then, for each conceptual branch (``after\_block'' and ``no\_backdoor''), evaluates independence, latent confounding, and causal direction via the combined MES-ACE module (Algorithm~\ref{alg:mes_ace_combined} in Appendix~\ref{appendix:mes_ace}). The resulting set of branch-specific outputs is then passed to a deterministic decision rule that aggregates them into a final causal relation, as in Algorithm~\ref{alg:tree_query_overall}.

\begin{algorithm}[h]
\caption{Tree-Query-Checks}
\label{alg:tree_checks}
\begin{algorithmic}[1]
  \STATE \textbf{Input:} variables $(X_1, X_2)$, variable set $V$
  \STATE \textbf{Output:} $\{\textit{have\_backdoor}, c_{\mathrm{bd}}, \textit{results}\}$
  \STATE $S_Q \gets \{\texttt{independence},\ \texttt{latent\_confounder},$ 
  \STATE $\texttt{causal\_direction}\}$
  \STATE $(\hat{y}_{\mathrm{bd}}, c_{\mathrm{bd}}, \mathcal{R}_{\mathrm{bd}}) 
         \gets \textsc{MES-ACE}(\texttt{backdoor\_path}, X_1, X_2, V)$
  \STATE $\textit{have\_backdoor} \gets (\hat{y}_{\mathrm{bd}} = 1)$
  \STATE $\textit{results} \gets \emptyset$
  \FOR{each branch $b \in \{\text{``after\_block''}, \text{``no\_backdoor''}\}$}
    \FOR{each query type $Q \in S_Q$}
      \STATE $(\hat{y}, c, \mathcal{R}) \gets \textsc{MES-ACE}(Q, X_1, X_2, V)$
      \STATE $\textit{rel} \gets \textsc{MapToRelation}(Q, \hat{y})$
      \STATE $\textit{results} \gets \textit{results} \cup \{(b, \textit{rel}, c)\}$
    \ENDFOR
  \ENDFOR
  \STATE \textbf{return} $\{\textit{have\_backdoor}, c_{\mathrm{bd}}, \textit{results}\}$
\end{algorithmic}
\end{algorithm}

\paragraph{Decision rule (aggregation layer).}
Given the outputs of \textsc{Tree-Query-Checks}, the decision rule (Algorithm~\ref{alg:decision_rule} in Appendix~\ref{DecisionRule1}) aggregates all branch-specific results in an early-stopping fashion. It first uses the backdoor confidence to decide whether to emphasize the ``after\_block'' branch, the ``no\_backdoor'' branch, or both. Within each branch, it proceeds in three stages:
(i) if strong evidence of independence is found, the branch concludes ``independent''; 
(ii) else, if strong evidence of latent confounding is found, it concludes $X_1 \leftrightarrow X_2$; 
(iii) otherwise, directional evidence (including negative statements such as ``not $\rightarrow$'' / ``not $\leftarrow$'') is converted into a preference between $X_1 \rightarrow X_2$ and $X_2 \rightarrow X_1$. The final output $(\hat{R}, \hat{c})$ is the relation with the highest aggregated confidence across branches.

\paragraph{Comparison and summary.}
Compared with traditional frameworks such as the PC algorithm, which rely on recursive conditional independence tests and can suffer from error propagation through changing conditioning sets, Tree-Query uses a fixed query structure for each causal pair. This decoupling of queries supports additive error control instead of multiplicative error cascades. Overall, Tree-Query follows a clear ``coarse-to-fine'' and ``query-first, aggregate-later'' logic: a stable query tree collects interpretable local evidence, and a deterministic rule turns it into global causal relations, providing a clean basis for the asymptotic identifiability guarantees. An overview of the resulting Tree-Query structure and its MES--ACE instantiation is shown in Fig.~\ref{TreeQuery} and Fig.~\ref{MES_ACE}.

\subsection{Multi-Expert System with Adversarial Confidence Estimation}

\subsubsection{Multi-Expert System for Robust Causal Judgment}
To improve robustness and generalization in complex causal reasoning, this work adopts a \textbf{Multi-Expert System} (MES), inspired by mixture-of-experts (MoE) architectures \citep{jacobs1991adaptive,jordan1994hierarchical,shazeer2017outrageously}. Instead of relying on a single monolithic reasoner, MES maintains a pool of experts, each encoding a distinct causal perspective or reasoning style. During inference, a query of type $Q$ (e.g., one of the four Tree-Query primitives) is routed to a suitable subset of experts via a routing function \texttt{RoutingRules} (detailed in Algorithm~\ref{alg:mes_ace_combined} in Appendix~\ref{appendix:mes_ace}), which selects experts whose specialties match $Q$ (e.g., graph-structure experts for \texttt{backdoor\_path}, statistical experts for \texttt{independence}, domain experts for \texttt{latent\_confounder}). The selected experts issue independent binary judgments under a shared prompt template, and MES aggregates these judgments by majority vote. The module outputs only the final binary label and the expert log $\mathcal{R}$; the confidence score used by Tree-Query is computed downstream by the adversarial confidence estimator.

Table~\ref{tab:experts} in Appendix~\ref{expert_prompt} presents an illustrative subset of experts that can be included in the system. The list is not fixed: experts can be added, removed, or customized depending on task requirements. Each expert corresponds to a coherent causal reasoning angle—ranging from structural causal modeling to intervention analysis or graphical heuristics—and all experts follow unified prompt templates with consistent input formatting and output conventions. The complete prompt templates and example I/O pairs are provided in Appendix~\ref{expert_prompt}.

\subsubsection{Adversarial Confidence Estimator}
\label{sec:ace}

While the Multi-Expert System provides robust binary judgments, it does not inherently offer a calibrated measure of certainty. To address this limitation and enable effective aggregation within the Tree-Query framework, we introduce an \textbf{Adversarial Confidence Estimator} (ACE). Inspired by adversarial training in robust machine learning \citep{madry2019deeplearningmodelsresistant}, ACE explicitly tests the stability of an MES decision under targeted counter-arguments. This process yields a scalar confidence score $c \in [0,1]$, which is subsequently used to weigh and compare assertions across the reasoning tree.

Given a natural-language query $x_{\textnormal{q}}$ (corresponding to some $(Q,X_1,X_2,V)$), MES is first run $N$ times with a fixed expert set, producing answers $\{x_{\textnormal{ans}}^{(i)}\}_{i=1}^{N}$ and binary conclusions $\{x_{\textnormal{concl}}^{(i)}\}_{i=1}^{N} \subset \{\textnormal{``Yes''}, \textnormal{``No''}\}$. Let $N_{\textnormal{yes}}$ and $N_{\textnormal{no}}$ be the counts of ``Yes'' and ``No'', and define the raw majority proportion
\begin{align}
p_0^{\textnormal{raw}} &= \frac{\max(N_{\textnormal{yes}}, N_{\textnormal{no}})}{N} \in [0.5,1], \\
p_0 &= 2p_0^{\textnormal{raw}} - 1 \in [0,1],
\end{align}
with majority label
\begin{equation}
y^* = \arg\max_{y \in \{\textnormal{``Yes''}, \textnormal{``No''}\}} |\{i : x_{\textnormal{concl}}^{(i)} = y\}|.
\end{equation}
To probe robustness, ACE then generates $n$ adversarial agents that argue for the opposite conclusion and re-queries MES under these perturbations, yielding adversarially influenced conclusions $\{{x'}_{\textnormal{concl}, j}^{(i)}\}_{i=1}^{N}$ for $j=1,\dots,n$. In implementation, $n=3$ personas are instantiated as diverse argumentation styles (Contrarian, Deceiver, and Hater); detailed prompt templates are provided in Appendix~\ref{app:ace_prompts}. For each adversarial persona $j$, the \emph{majority-aligned consistency} is defined as
\begin{equation}
\label{consistency_prob}
p_j = \frac{1}{N} \sum_{i=1}^{N} \mathbbm{1}\bigl({x'}_{\textnormal{concl}, j}^{(i)} = y^* \bigr),
\end{equation}
measuring how often the adversarially perturbed answers remain aligned with the original majority conclusion.

Let $\lambda_1,\dots,\lambda_n \ge 0$ with $\sum_{j=1}^n \lambda_j = 1$ denote persona weights (set to $\lambda_j = 1/n$ in all experiments). The final confidence score combines the strength of the original consensus and its robustness to adversarial perturbations:
\begin{equation}
\label{eq:confidence_score}
c \;=\; p_0 \cdot \left(1 - \sum_{j=1}^{n} \lambda_j \cdot \frac{|p_j - p_0^{\textnormal{raw}}|}{p_0^{\textnormal{raw}}}\right),
\end{equation}
where large deviations $|p_j - p_0^{\textnormal{raw}}|$ indicate that the baseline consensus is easily undermined. The complete algorithmic implementation of both MES and ACE is provided in Algorithm~\ref{alg:mes_ace_combined} in Appendix~\ref{appendix:mes_ace}.

\section{Theoretical Analysis}

\subsection{Theoretical Analysis of the Tree-Query Framework}

\begin{theorem}[Asymptotic Identifiability of Tree-Query]
\label{thm:asym_identifiability}
Under Assumptions~\ref{a1}--\ref{a4}, consider any variable pair $(X_1, X_2)$ with its true causal relation
\[
R \in \{\,X_1 \rightarrow X_2,\; X_2 \rightarrow X_1,\; X_1 \leftrightarrow X_2,\; X_1 \perp X_2\,\}
\]
Let Tree-Query consist of $M$ deterministic decision queries (decision stages), and assume that each query is answered by the Multi-Expert System using $m$ independent experts (as in Algorithm~\ref{alg:mes_ace_combined} in Appendix~\ref{appendix:mes_ace}), whose individual error probability satisfies $\alpha < 0.5$.

Then for every relation $R$, the probability that Tree-Query identifies true relation $R$ satisfies
\begin{equation}
P_{\textnormal{correct}}(R) \;\ge\; 1 - M \cdot \exp\!\left[-2m(0.5-\alpha)^2\right].
\end{equation}
Moreover,
\begin{equation}
\lim_{m\to\infty} P_{\textnormal{correct}}(R) = 1, 
\qquad
\lim_{\alpha\to0} P_{\textnormal{correct}}(R) = 1.
\end{equation}
\end{theorem}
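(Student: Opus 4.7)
Proof plan: My plan is to prove the bound by a two-layer concentration argument. First, at the level of a single query node, I would use Hoeffding's inequality on the $m$ expert votes to bound the probability that the MES majority vote returns the wrong binary answer. Second, at the level of the full tree, because the controller and decision rule are deterministic functions of the query labels, I would combine these per-node bounds via a union bound over the $M$ decision queries, reducing the claim to the event that every relevant query is answered correctly.

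For the per-node step, fix a query and let $Y_i \in \{0,1\}$ indicate that expert $i$ errs on it. Assumption~\ref{a4} supplies independence of the $Y_i$ and $\Pr[Y_i = 1] \leq \alpha < 0.5$, so $\mathbb{E}[\bar{Y}] \leq \alpha$. The MES majority vote is wrong iff $\bar{Y} \geq 0.5$, and Hoeffding's inequality applied to the bounded variables $Y_i \in [0,1]$ then yields
\[
\Pr[\text{MES wrong}] \;\leq\; \Pr\!\bigl[\bar{Y} - \mathbb{E}[\bar{Y}] \geq 0.5 - \alpha\bigr] \;\leq\; \exp\!\bigl[-2m(0.5-\alpha)^2\bigr] \;=:\; \varepsilon_m.
\]

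For the tree-level step, I would first establish a structural soundness lemma: under Assumptions~\ref{a1}--\ref{a3}, the tuple of \emph{true} binary labels returned by the four query types (\texttt{backdoor\_path}, \texttt{independence}, \texttt{latent\_confounder}, \texttt{causal\_direction}) uniquely identifies $R$ on every root-to-leaf path of Fig.~\ref{TreeQuery}. Causal Markov and faithfulness align d-separation with conditional independence, while weak causal sufficiency legitimizes the latent-confounder node by guaranteeing that bidirected edges leave detectable statistical traces. Hence, if every MES call on the realized path outputs the correct label, the deterministic decision rule of Appendix~\ref{DecisionRule1} returns $\hat{R} = R$. Denoting by $E_k$ the event that the $k$-th query errs, the union bound across the $M$ decision queries gives $\Pr\!\bigl[\bigcup_{k=1}^{M} E_k\bigr] \leq M\varepsilon_m$, and complementing yields the claimed lower bound on $P_{\textnormal{correct}}(R)$. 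The two limits then follow by inspection: as $m \to \infty$ the exponent $-2m(0.5-\alpha)^2$ diverges, and as $\alpha \to 0$ the factor $(0.5-\alpha)^2 \to 0.25$ keeps the exponent unbounded in $m$, so $\varepsilon_m \to 0$ in either limit.

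The concentration and union-bound steps are textbook; the main obstacle is the structural soundness lemma above. I would handle it by enumerating, for each of the four true relations $R$, the sequence of correct answers produced along the branch selected by a correct \texttt{backdoor\_path} call, and verifying case-by-case that the aggregation rule maps this sequence back to $R$. A subtle bookkeeping point is that Fig.~\ref{TreeQuery} conceptually runs \texttt{independence}, \texttt{latent\_confounder} and \texttt{causal\_direction} on both the ``after\_block'' and ``no\_backdoor'' branches, so the union bound must cover all $M$ queries actually evaluated rather than only those on a single branch; once this enumeration is made explicit, the per-node Hoeffding bound plugs in cleanly to yield the stated inequality.
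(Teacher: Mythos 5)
Your proposal follows essentially the same route as the paper's proof: a per-query Hoeffding bound of $\exp[-2m(0.5-\alpha)^2]$ on the majority-vote error under Assumption~\ref{a4}, followed by a union bound over the $M$ deterministic decision queries, with the limits read off from the exponent. The only difference is presentational — you isolate the ``structural soundness'' step (correct labels $\Rightarrow$ correct $\hat{R}$) as an explicit lemma, whereas the paper handles it by a brief case enumeration over the four relations — so the argument is correct and matches the paper's.
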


\textbf{Proof.} See Appendix~\ref{appendix:proof_asym_identifiability}.

Theorem~\ref{thm:asym_identifiability} shows that, under Assumptions~\ref{a1}--\ref{a4} and $\alpha < 0.5$, Tree-Query is \emph{asymptotically identifiable}. For any of the four possible relations
\[
R \in \{X_1 \rightarrow X_2,\; X_2 \rightarrow X_1,\; X_1 \leftrightarrow X_2,\; X_1 \perp X_2\}
\]
the probability that Tree-Query outputs the correct $R$ can be made arbitrarily close to $1$ by increasing the number of experts $m$ per query, while keeping the tree structure (i.e., the finite number of decision stages $M$) fixed. In other words, the tree-shaped query flow is expressive enough to identify all four relations, and the only limitation comes from expert noise: as long as individual experts are better than random guessing ($\alpha < 0.5$), aggregating sufficiently many of them at each node drives the overall error probability to zero, yielding arbitrarily reliable recovery of the true causal relation.

\begin{proposition}[Overall Causal Graph Identification Reliability]
\label{thm1}
Under Assumptions~\ref{a1}--\ref{a4}, for a causal graph containing $e$ edges, let $P_{\textnormal{pair}}$ denote the probability that Tree-Query correctly identifies the causal relation for a single edge. Then
\begin{equation}
E_{\textnormal{TQ}} \;=\; e \cdot P_{\textnormal{pair}} 
\;\geq\; e \cdot \left[1 - M \cdot \exp\!\left(-2m(0.5-\alpha)^2\right)\right].
\end{equation}
\end{proposition}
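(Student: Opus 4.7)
The plan is to express the expected number of correctly recovered edges as a sum of per-edge Bernoulli indicators and then reduce the statement to a direct application of Theorem~\ref{thm:asym_identifiability} via linearity of expectation. First I would enumerate the $e$ edges of the true graph $G^\star$ as ordered pairs $(X_{i_k}, X_{j_k})$ for $k = 1, \dots, e$, each carrying a ground-truth relation $R_k \in \{\rightarrow, \leftarrow, \leftrightarrow, \perp\}$. For each such edge I define the indicator $Z_k = \mathbbm{1}\{\hat R_k = R_k\}$, where $\hat R_k$ is the output of Algorithm~\ref{alg:tree_query_overall} on that pair. Because Tree-Query processes each pair through its own isolated MES--ACE invocations without shared conditioning sets, as argued in Section~\ref{sec:tree_query}, each $Z_k$ is a well-defined Bernoulli random variable whose marginal mean is exactly the per-pair success probability $P_{\text{pair}}(R_k)$.

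Next I would apply linearity of expectation, which does not require any independence across the $Z_k$: $E_{\text{TQ}} = \mathbb{E}\bigl[\sum_{k=1}^{e} Z_k\bigr] = \sum_{k=1}^{e} P_{\text{pair}}(R_k)$. Under the statement's convention that $P_{\text{pair}}$ denotes a common per-edge success probability, or more generally a lower bound valid across all four relation types, this sum collapses to $e \cdot P_{\text{pair}}$. Substituting the per-pair guarantee from Theorem~\ref{thm:asym_identifiability}, namely $P_{\text{pair}}(R_k) \ge 1 - M \exp[-2m(0.5-\alpha)^2]$, then yields the desired inequality $E_{\text{TQ}} \ge e \cdot [1 - M \exp(-2m(0.5-\alpha)^2)]$.

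The main, and essentially only, obstacle is verifying that the per-pair bound from Theorem~\ref{thm:asym_identifiability} is genuinely uniform across all four possible relation types, so that a single scalar bound may be factored out of the sum. This requires checking that the quantifier ``for every $R$'' in that theorem is actually instantiated at the worst-case relation, and that $M$ upper-bounds the longest root-to-leaf decision sequence in the Tree-Query structure of Fig.~\ref{TreeQuery} regardless of which branch is traversed for each $R_k$. Once this uniformity is confirmed, no further probabilistic machinery is needed: summing $e$ identical lower bounds completes the proof, and notably no edge-level independence or joint-distribution argument is required thanks to linearity of expectation.
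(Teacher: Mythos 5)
Your proposal is correct and follows essentially the same route as the paper's proof: decompose $E_{\textnormal{TQ}}$ into per-edge correctness probabilities, identify each with $P_{\textnormal{pair}}$, and substitute the lower bound from Theorem~\ref{thm:asym_identifiability}. If anything, your version is slightly more careful than the paper's, which justifies the sum by appealing to independence of the per-edge judgments when, as you note, linearity of expectation alone suffices, and you also make explicit the (correct) check that the bound in Theorem~\ref{thm:asym_identifiability} is uniform over all four relation types.
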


\textbf{Proof.} See Appendix~\ref{appendix:Prop1}.

\begin{proposition}[Practical Feasibility Boundary]
\label{prop2}
Under Assumptions~\ref{a1}--\ref{a4},to achieve a target correctness probability $P_{\textnormal{target}}$ at the \emph{pairwise} level, the required number of experts $m$ per query satisfies:
\begin{equation}
m \;\geq\; 
\left\lceil 
\frac{-\ln\!\left(\dfrac{1-P_{\textnormal{target}}}{M}\right)}{2(0.5-\alpha)^2} 
\right\rceil.
\end{equation}
\end{proposition}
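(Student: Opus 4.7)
The plan is to derive the required $m$ by directly inverting the high-probability bound from Theorem~\ref{thm:asym_identifiability}. That theorem already guarantees, for any single pair,
\[
P_{\textnormal{correct}}(R) \;\ge\; 1 - M \cdot \exp\!\left[-2m(0.5-\alpha)^2\right],
\]
and the right-hand side is strictly monotonically increasing in $m$ whenever $\alpha<0.5$ (Assumption~\ref{a4}). Hence, to certify $P_{\textnormal{correct}}(R) \ge P_{\textnormal{target}}$, it suffices to pick the smallest $m$ that makes this lower bound at least $P_{\textnormal{target}}$. The entire argument is then algebraic.

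Concretely, I would impose $1 - M\exp[-2m(0.5-\alpha)^2] \ge P_{\textnormal{target}}$, rearrange to $M\exp[-2m(0.5-\alpha)^2] \le 1 - P_{\textnormal{target}}$, divide by $M>0$, take the natural logarithm (monotone), and flip the sign to obtain
\[
2m(0.5-\alpha)^2 \;\ge\; -\ln\!\left(\frac{1-P_{\textnormal{target}}}{M}\right).
\]
Since $2(0.5-\alpha)^2>0$ by Assumption~\ref{a4}, dividing through yields the real-valued bound
\[
m \;\ge\; \frac{-\ln\!\bigl((1-P_{\textnormal{target}})/M\bigr)}{2(0.5-\alpha)^2},
\]
and taking the ceiling produces the stated integer expert count.

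There is essentially no deep obstacle here; the proof is monotone inversion of an exponential-tail bound. The only care required is well-posedness: the argument $(1-P_{\textnormal{target}})/M$ is positive because $P_{\textnormal{target}}<1$ and bounded above by $1/M \le 1$ because $M \ge 1$, so its logarithm is non-positive and the numerator $-\ln(\cdot)$ is non-negative; combined with the strictly positive denominator from $\alpha<0.5$, the closed-form expression is a finite non-negative real whose ceiling is a well-defined positive integer. A conceptual caveat worth noting is that this $m$ is a \emph{sufficient} rather than tight requirement, inheriting all slack from the Hoeffding-plus-union-bound argument underlying Theorem~\ref{thm:asym_identifiability}, so the resulting expert count should be read as a conservative operating point rather than a sharp lower bound.
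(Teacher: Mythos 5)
Your proposal is correct and follows essentially the same route as the paper's proof, which simply sets $1 - M\exp[-2m(0.5-\alpha)^2] \ge P_{\textnormal{target}}$ and rearranges. Your version is actually more careful than the paper's one-line argument, since you also verify well-posedness of the logarithm and the sign of the resulting expression.
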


\textbf{Proof.} See Appendix~\ref{appendix:Prop2}.

Propositions~\ref{thm1} and \ref{prop2} characterize both the reliability and resource requirements of Tree-Query in graph-based causal discovery. Proposition~\ref{thm1} lower-bounds the expected number of correctly identified edges in a graph of size $e$ in terms of the tree depth $M$, the number of experts $m$ per query, and the individual error rate $\alpha$, while Proposition~\ref{prop2} inverts this relationship to give the minimal $m$ needed to reach a target pairwise correctness level $P_{\textnormal{target}}$, making explicit the trade-off between expert quality, tree complexity, and desired reliability.

\subsection{ACE: Evaluating Distributional Robustness against Adversarial Perturbations}

Let $\mathbf{H}_{\textnormal{q}}$, $\mathbf{H}_{\textnormal{ans}}$, and $\mathbf{H}_{\textnormal{adv}}$ denote semantic representations of the query, the original reasoning, and the adversary-influenced reasoning, respectively. With pretrained matrices $\mathbf{W}^\mathbf{Q}, \mathbf{W}^\mathbf{K}, \mathbf{W}^\mathbf{V}$ and key/query dimension $d$, define the (unnormalized) attention score of a context $\mathbf{H}'$ as
\begin{gather*}
S(\mathbf{H}') := \frac{1}{\sqrt{d}}\, \mathbf{W}^\mathbf{Q}(\mathbf{H}'\mathbf{W}^\mathbf{K})^\top \\
S_{\textnormal{ans}} := S(\mathbf{H}_{\textnormal{ans}}), \quad
S_{\textnormal{adv}} := S(\mathbf{H}_{\textnormal{adv}})
\end{gather*}

\begin{assumption}[Argument strength via spectral alignment]
The score $S(\mathbf{H}')$ monotonically reflects the semantic strength of $\mathbf{H}'$ for the current query. In particular, robust conclusions satisfy $S_{\textnormal{ans}} \ge S_{\textnormal{adv}}$, while a successful adversarial prompt can increase $S_{\textnormal{adv}}$ relative to $S_{\textnormal{ans}}$.
\end{assumption}

We consider binary outputs $\mathcal{Y} = \{\textnormal{``Yes''}, \textnormal{``No''}\}$. The attention aggregation yields a context vector
\begin{equation}
  \mathbf{c} = \frac{1}{Z}\left(e^{S_{\textnormal{ans}}} \mathbf{H}_{\textnormal{ans}}\mathbf{W}^\mathbf{V} + e^{S_{\textnormal{adv}}} \mathbf{H}_{\textnormal{adv}}\mathbf{W}^\mathbf{V}\right),
\end{equation}
where $Z$ is a normalization factor. Let $v_{\textnormal{yes}} = \mathbf{c}^\top \mathbf{w}_{\textnormal{yes}}$ and $v_{\textnormal{no}} = \mathbf{c}^\top \mathbf{w}_{\textnormal{no}}$ be the logits for the two labels, and define
\begin{equation}
  	heta := P(y = \textnormal{``Yes''}) = \frac{e^{v_{\textnormal{yes}}}}{e^{v_{\textnormal{yes}}} + e^{v_{\textnormal{no}}}}.
\end{equation}
Thus the output distribution $\mu$ on $\mathcal{Y}$ is Bernoulli: $\mu \sim \mathrm{Bern}(\theta)$.

\begin{lemma}[Distributional shift]
\label{lemma:distributional_shift}
Let $\mu_{\textnormal{ans}}$ and $\mu_{\textnormal{adv}}$ denote the (population) output distributions over $\mathcal{Y}$ conditioned on the original and adversarial contexts, respectively. Adversarial prompting induces a shift in the Bernoulli parameter $\theta$, redistributing probability mass from the original conclusion $y_{\textnormal{concl}} \in \mathcal{Y}$ to the opposite outcome $\bar{y}_{\textnormal{concl}} = \mathcal{Y} \setminus \{y_{\textnormal{concl}}\}$.
\end{lemma}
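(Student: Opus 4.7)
The plan is to trace how the change in attention weights induced by an adversarial prompt propagates through the context vector $\mathbf{c}$ to the logits $(v_{\textnormal{yes}},v_{\textnormal{no}})$, and finally to the Bernoulli parameter $\theta$, invoking strict monotonicity of softmax and the logistic map at each step. The computation is deterministic once the scores $S_{\textnormal{ans}},S_{\textnormal{adv}}$ are fixed, so the ``shift'' claim reduces to comparing two closed-form expressions for $\theta$ under the baseline and adversarial regimes.

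First I would write $\mathbf{c}$ as a convex combination of $\mathbf{H}_{\textnormal{ans}}\mathbf{W}^\mathbf{V}$ and $\mathbf{H}_{\textnormal{adv}}\mathbf{W}^\mathbf{V}$ with weights $w_{\textnormal{ans}}=e^{S_{\textnormal{ans}}}/Z$ and $w_{\textnormal{adv}}=e^{S_{\textnormal{adv}}}/Z$. By the argument-strength assumption, a successful adversarial prompt strictly increases $S_{\textnormal{adv}}$ relative to $S_{\textnormal{ans}}$; because the softmax is strictly monotone in the gap $S_{\textnormal{adv}}-S_{\textnormal{ans}}$, $w_{\textnormal{adv}}$ strictly increases while $w_{\textnormal{ans}}$ strictly decreases. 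Next I would decompose the logit gap as
\[
\Delta v \;:=\; v_{\textnormal{yes}}-v_{\textnormal{no}} \;=\; w_{\textnormal{ans}}\,a_{\textnormal{ans}} + w_{\textnormal{adv}}\,a_{\textnormal{adv}},
\]
where $a_{\ast}:=(\mathbf{H}_{\ast}\mathbf{W}^\mathbf{V})^\top(\mathbf{w}_{\textnormal{yes}}-\mathbf{w}_{\textnormal{no}})$ is the signed contribution of context $\ast$ to the Yes-vs-No axis. Because the ACE persona is explicitly constructed to argue for $\bar{y}_{\textnormal{concl}}$, the adversarial context satisfies $\mathrm{sign}(a_{\textnormal{adv}})=-\mathrm{sign}(a_{\textnormal{ans}})$ on this axis, so reweighting from $w_{\textnormal{ans}}$ to $w_{\textnormal{adv}}$ strictly moves $\Delta v$ in the direction of $\bar{y}_{\textnormal{concl}}$.

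Finally, since $\theta=\sigma(\Delta v)$ with $\sigma$ strictly increasing, the induced shift in $\Delta v$ translates into a strict shift of $\theta$ away from $y_{\textnormal{concl}}$ and toward $\bar{y}_{\textnormal{concl}}$. Writing $\mu_{\textnormal{ans}}$ and $\mu_{\textnormal{adv}}$ as the two Bernoulli laws obtained by plugging the baseline and perturbed $\Delta v$ into $\sigma$, this yields the claimed redistribution $\mu_{\textnormal{adv}}(\bar{y}_{\textnormal{concl}})>\mu_{\textnormal{ans}}(\bar{y}_{\textnormal{concl}})$ and, equivalently, $\mu_{\textnormal{adv}}(y_{\textnormal{concl}})<\mu_{\textnormal{ans}}(y_{\textnormal{concl}})$.

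The main obstacle I expect is formalising the claim that $a_{\textnormal{adv}}$ has sign opposite to $a_{\textnormal{ans}}$ along the Yes/No axis: the stated spectral-alignment assumption only constrains the magnitude $S(\mathbf{H}')$, not the directional projection of $\mathbf{H}_{\textnormal{adv}}\mathbf{W}^\mathbf{V}$ onto $\mathbf{w}_{\textnormal{yes}}-\mathbf{w}_{\textnormal{no}}$. I would resolve this by adding one explicit hypothesis that the adversarial generator is semantically aligned with $\bar{y}_{\textnormal{concl}}$ (true by construction of ACE's Contrarian/Deceiver/Hater personas), after which the remainder of the argument is a routine application of strict monotonicity of softmax and the logistic map.
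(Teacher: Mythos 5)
The paper gives no proof of Lemma~3.5 at all: it is asserted as an interpretive consequence of the attention-based model and the spectral-alignment assumption, and only the companion Proposition (the Wasserstein identity) receives a proof in the appendix. Your proposal therefore does strictly more than the paper: you supply an actual derivation, writing $\mathbf{c}$ as a convex combination with softmax weights $w_{\textnormal{ans}}, w_{\textnormal{adv}}$, decomposing the logit gap as $\Delta v = w_{\textnormal{ans}}a_{\textnormal{ans}} + w_{\textnormal{adv}}a_{\textnormal{adv}}$, and pushing the weight shift through the strictly increasing logistic map to get a strict movement of $\theta$ toward $\bar{y}_{\textnormal{concl}}$. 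The chain of monotonicity arguments is sound (one small remark: sign opposition $\mathrm{sign}(a_{\textnormal{adv}}) = -\mathrm{sign}(a_{\textnormal{ans}})$ is stronger than needed; since $w_{\textnormal{ans}} + w_{\textnormal{adv}} = 1$, the reweighting moves $\Delta v$ toward $\bar{y}_{\textnormal{concl}}$ as soon as $a_{\textnormal{adv}}$ lies on the $\bar{y}_{\textnormal{concl}}$ side of $a_{\textnormal{ans}}$). More importantly, the obstacle you flag is a real gap in the paper's own framework, not just in your argument: the stated assumption constrains only the scalar score $S(\mathbf{H}_{\textnormal{adv}})$ and says nothing about the projection of $\mathbf{H}_{\textnormal{adv}}\mathbf{W}^\mathbf{V}$ onto $\mathbf{w}_{\textnormal{yes}} - \mathbf{w}_{\textnormal{no}}$, so without your added directional-alignment hypothesis the lemma does not follow from what the paper states. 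Your version makes explicit, and patches, an assumption the paper leaves implicit; what the paper's (non-)proof buys is brevity at the cost of rigor, while yours buys a complete argument at the cost of one extra, well-motivated hypothesis about the adversarial personas.
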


Equipping $\mathcal{Y}$ with the discrete metric $d(y,y') = \mathbbm{1}(y \neq y')$, the Wasserstein-1 distance between two empirical measures on $\mathcal{Y}$ reduces to the absolute difference in mass assigned to one label. This yields the following interpretation of the deviation term in Eq.~\eqref{eq:confidence_score}.

\begin{proposition}[Confidence as robustness against distributional shift]
\label{prop:confidence_robustness}
Let $\hat{\mu}_{\textnormal{ans}}$ and $\hat{\mu}_{\textnormal{adv}, j}$ be the empirical measures on $\mathcal{Y}$ corresponding to the original answers and the answers under the $j$-th adversarial agent, respectively. If $\mathcal{Y}$ is equipped with the discrete metric $d(y, y') = \mathbbm{1}(y \neq y')$, then
\begin{equation}
  |p_j - p_0^{\textnormal{raw}}| \;=\; W_1(\hat{\mu}_{\textnormal{ans}}, \hat{\mu}_{\textnormal{adv}, j})
\end{equation}
where $W_1$ is the Wasserstein-1 distance. Consequently, the confidence score in Eq.~\eqref{eq:confidence_score} measures the robustness of the original output distribution to adversarially induced distributional shifts.
\end{proposition}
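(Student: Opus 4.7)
The plan is to identify the two empirical measures explicitly and then exploit the fact that on the two-point space $\mathcal{Y} = \{\textnormal{``Yes''}, \textnormal{``No''}\}$ equipped with the discrete metric $d(y,y') = \mathbbm{1}(y \ne y')$, the Wasserstein-1 distance collapses to the absolute difference of mass on either singleton. The claimed equality then becomes essentially a definitional rewriting, and the ``robustness'' interpretation follows by substituting the identity back into Eq.~\eqref{eq:confidence_score}.

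First I would unpack the empirical measures: $\hat{\mu}_{\textnormal{ans}}(y^*) = \frac{1}{N}\sum_{i=1}^N \mathbbm{1}(x_{\textnormal{concl}}^{(i)} = y^*) = p_0^{\textnormal{raw}}$ because $y^*$ is chosen as the majority label, and $\hat{\mu}_{\textnormal{adv},j}(y^*) = p_j$ directly by Eq.~\eqref{consistency_prob}. Next I would either invoke the Kantorovich--Rubinstein dual
\[
W_1(\mu,\nu) = \sup_{\|f\|_{\textnormal{Lip}}\le 1} \int f \, d(\mu - \nu),
\]
and note that under the discrete metric a function $f:\mathcal{Y}\to\mathbb{R}$ is $1$-Lipschitz iff $|f(\textnormal{``Yes''}) - f(\textnormal{``No''})| \le 1$, so the supremum is attained by $f = \mathbbm{1}_{\{y^*\}}$; or use the coupling definition $W_1(\mu,\nu) = \inf_{\pi} \Pr_\pi(Y \ne Y')$, which for a discrete metric equals the total variation distance and on a two-point space further reduces to $|\mu(y^*) - \nu(y^*)|$. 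Either route yields
\[
W_1(\hat{\mu}_{\textnormal{ans}}, \hat{\mu}_{\textnormal{adv},j}) = |\hat{\mu}_{\textnormal{ans}}(y^*) - \hat{\mu}_{\textnormal{adv},j}(y^*)| = |p_0^{\textnormal{raw}} - p_j|,
\]
which is the desired identity.

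For the second part of the statement, substituting this identity into Eq.~\eqref{eq:confidence_score} rewrites the confidence as $c = p_0 \cdot \bigl(1 - \sum_j \lambda_j \, W_1(\hat{\mu}_{\textnormal{ans}}, \hat{\mu}_{\textnormal{adv},j})/p_0^{\textnormal{raw}}\bigr)$. Combined with Lemma~\ref{lemma:distributional_shift}, which guarantees that a successful adversary redistributes probability mass from $y_{\textnormal{concl}}$ toward $\bar{y}_{\textnormal{concl}}$, this shows that larger adversarially induced distributional shifts $W_1$ translate monotonically into smaller confidence scores, so $c$ can be read as a robustness index against distributional shift.

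The main obstacle is not analytical but notational: one must carefully distinguish the raw majority proportion $p_0^{\textnormal{raw}} \in [0.5, 1]$ (the mass at $y^*$) from the rescaled $p_0 = 2 p_0^{\textnormal{raw}} - 1$ that appears in the confidence formula, and interpret $\hat{\mu}_{\textnormal{ans}}$ as the empirical distribution of the full expert sequence $\{x_{\textnormal{concl}}^{(i)}\}_{i=1}^N$ rather than a point mass at $y^*$ (otherwise one would get $1 - p_j$ instead of $|p_0^{\textnormal{raw}} - p_j|$). With these conventions fixed, the two-point structure of $\mathcal{Y}$ makes the identity immediate, requiring no further machinery beyond the equivalence $W_1 = \textnormal{TV}$ on discrete metric spaces.
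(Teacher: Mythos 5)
Your proposal is correct and follows essentially the same route as the paper: identify $\hat{\mu}_{\textnormal{ans}}(y^*)=p_0^{\textnormal{raw}}$ and $\hat{\mu}_{\textnormal{adv},j}(y^*)=p_j$, reduce $W_1$ under the discrete metric on the two-point space to $|\mu(y^*)-\nu(y^*)|$ via the coupling formulation (the paper spells out the optimal coupling explicitly where you invoke the known $W_1=\mathrm{TV}$ equivalence or the Kantorovich--Rubinstein dual), and substitute back into Eq.~\eqref{eq:confidence_score}. Your caution about reading $\hat{\mu}_{\textnormal{ans}}$ as the full empirical distribution rather than a point mass at $y^*$ is exactly the convention the paper adopts.
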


\textbf{Proof.} See Appendix~\ref{proof:confi}.

Lemma~\ref{lemma:distributional_shift} and Proposition~\ref{prop:confidence_robustness} provide a compact robustness interpretation of ACE. Lemma~\ref{lemma:distributional_shift} formalizes adversarial prompting as a shift in the binary output distribution, while Proposition~\ref{prop:confidence_robustness} shows that the deviation term $|p_j-p_0^{\textnormal{raw}}|$ equals a Wasserstein-1 distance under the discrete metric. Consequently, Eq.~\eqref{eq:confidence_score} can be viewed as rewarding both strong baseline consensus and stability of that consensus under adversarially induced distributional shifts.

\section{Experiment}

\subsection{Benchmark}

All methods are evaluated on the causal structures from
Mooij et al.~\cite{JMLR:v17:14-518} and the UCI Machine Learning Repository~\cite{UCIRepository}, using only the \textbf{ground-truth graphs} and no observational data, so that evaluation is purely \emph{data-free} and focuses on structural reasoning.

\subsubsection{Standard Benchmark}

The \textbf{Standard} benchmark consists of the original directed graphs released by Mooij et al.  
Since data are not used, the task reduces to recovering the directed edges and orientations in these ground-truth structures, isolating the structural reasoning ability of each method.

\subsubsection{Latent-Confounder Benchmark}

To assess \textbf{latent confounding}, a modified benchmark is built by masking each unobserved confounder and inserting a \textbf{bidirected edge} ($A \leftrightarrow B$) between its observed children, following standard PAG semantics.  
Figure~\ref{benchmark} illustrates this transformation, yielding a benchmark that explicitly tests the handling of latent common causes and edge ambiguity.

\subsection{Metrics}

Normalized Discounted Cumulative Gain (NDCG) and Structural Hamming Distance (SHD) are used in the experiment as the metrics. For methods that produce confidence scores over candidate relations, candidates are ranked by confidence and evaluated with
\(
\mathrm{DCG}@K = \sum_{i=1}^{K} \frac{\mathrm{rel}_i}{\log_2(i+1)}, \quad 
\mathrm{NDCG}@K = \frac{\mathrm{DCG}@K}{\mathrm{IDCG}@K} \in [0,1],
\)
where $\mathrm{rel}_i \in \{0,1\}$ and $\mathrm{IDCG}@K$ is the ideal DCG. NDCG is reported only for methods that output a ranked list (e.g., Tree-Query); Direct LLM returns a single label per pair, so NDCG is marked as N/A. SHD counts the number of edge additions, deletions, and orientation reversals needed to turn the predicted graph into the ground truth, computed on directed graphs for the Standard benchmark and on bidirected-augmented graphs for the Latent benchmark.

\begin{figure}[t]
  \centering
  \includegraphics[width=1\linewidth]{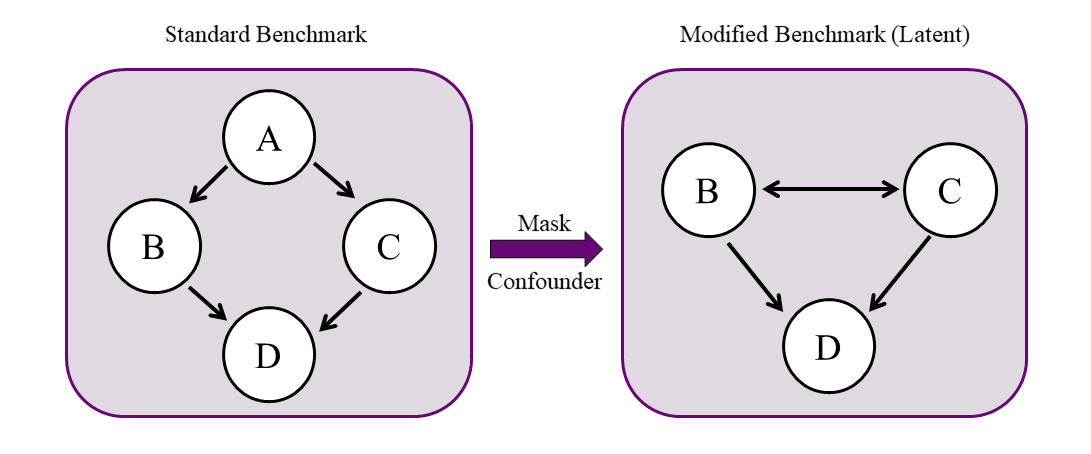}
  \caption{Construction of the Latent benchmark by replacing hidden confounders with bidirected edges.}
  \label{benchmark}
\end{figure}

\subsection{Baselines}

The Direct LLM baseline uses the same base models as Tree-Query (Qwen2.5-7B, DeepSeek-R1-7B, Llama-3-8B, Mistral-8B), but queries each variable pair with a single prompt, without tree-structured decomposition or explicit graph construction.  
Each answer (e.g., ``$X$ causes $Y$'', ``$Y$ causes $X$'', ``no direct causal relation'') is mapped to an edge type, yielding a single categorical decision per pair.  
Since no scores or rankings are produced, we only report SHD for this baseline.

\subsection{Results}

As shown in Table~\ref{tab:tree_query_results}, Tree-Query consistently outperforms Direct LLM querying across all base models and both benchmarks. Tree-Query achieves NDCG between $0.73$ and $0.81$ on Standard and $0.68$–$0.76$ on Latent, while Direct LLM has no NDCG by design. In terms of SHD, Tree-Query reduces structural errors by roughly $20$ edges per graph compared to Direct LLM in both benchmarks, indicating substantially more accurate graph reconstruction even in the more challenging Latent setting.

\begin{table}[t]
\centering
\small
\resizebox{1\linewidth}{!}{
\begin{tabular}{l l | cc | cc}
\toprule
& & \multicolumn{2}{c|}{\textbf{Standard}} & \multicolumn{2}{c}{\textbf{Latent}} \\
\textbf{Method} & \textbf{Base LLM} & \textbf{NDCG}\textsubscript{↑} & \textbf{SHD}\textsubscript{↓} & \textbf{NDCG}\textsubscript{↑} & \textbf{SHD}\textsubscript{↓} \\
\midrule
Direct LLM   & Qwen2.5-7B      & N/A          & 52.3 (2.8) & N/A          & 59.1 (2.6) \\
Tree-Query   & Qwen2.5-7B      & 0.81 (0.02)  & 31.4 (3.1) & 0.76 (0.01)  & 38.7 (1.9) \\
\midrule
Direct LLM   & DeepSeek-R1-7B  & N/A          & 54.8 (3.0) & N/A          & 62.0 (2.7) \\
Tree-Query   & DeepSeek-R1-7B  & 0.79 (0.06)  & 33.7 (3.4) & 0.74 (0.03)  & 41.3 (2.0) \\
\midrule
Direct LLM   & Llama-3-8B      & N/A          & 60.5 (3.4) & N/A          & 67.8 (3.2) \\
Tree-Query   & Llama-3-8B      & 0.75 (0.03)  & 38.9 (2.9) & 0.70 (0.06)  & 46.2 (2.4) \\
\midrule
Direct LLM   & Mistral-8B      & N/A          & 62.1 (3.5) & N/A          & 69.2 (3.3) \\
Tree-Query   & Mistral-8B      & 0.73 (0.03)  & 40.7 (3.4) & 0.68 (0.02)  & 48.5 (2.5) \\
\bottomrule
\end{tabular}
}
\caption{Comparison of Tree-Query and direct LLM causal querying across two benchmarks (values are mean (std) over runs).}
\label{tab:tree_query_results}
\end{table}

\begin{table}[t]
\centering
\small
\resizebox{\linewidth}{!}{
\begin{tabular}{l | c c c c}
\toprule
\textbf{Property} &
\textbf{Tree-Query} &
\textbf{Direct LLM} &
\textbf{PC} &
\textbf{FCI} \\
\midrule
Output Type & DAG(Conf.) & DAG & CPDAG & PAG \\
Detects Confounders & \checkmark & \xmark & \xmark & \checkmark \\
Outputs Confidence & \checkmark & \xmark & \xmark & \xmark \\
Transparent Process & \checkmark & \xmark & \checkmark & \checkmark \\
Data-Free & \checkmark & \checkmark & \xmark & \xmark \\
\bottomrule
\end{tabular}
}
\caption{Comparison across key properties for Tree-Query, Direct LLM, PC, and FCI.}
\end{table}

\section{Case Study: Confounder Screening for the Effect of Diet on Weight}

This section illustrates how Tree-Query performs confounder screening for the causal effect of dietary intervention (\texttt{diet}) on weight change (\texttt{weight}). The model outputs one of four causal relations with an associated confidence score. Starting from a bivariate setting, additional variables---exercise (\texttt{exercise}), family support (\texttt{family support}), and basal metabolic rate (\texttt{BMR})---are introduced to show how confounding and causal confidence evolve. As shown in Figure~\ref{case_demo}, the confounding confidence decreases as relevant covariates enter, while the causal direction (\texttt{diet} $\rightarrow$ \texttt{weight}) remains stable.

\begin{figure}[h]
  \vskip 0.1in
  \begin{center}
    \centerline{\includegraphics[width=0.5\textwidth]{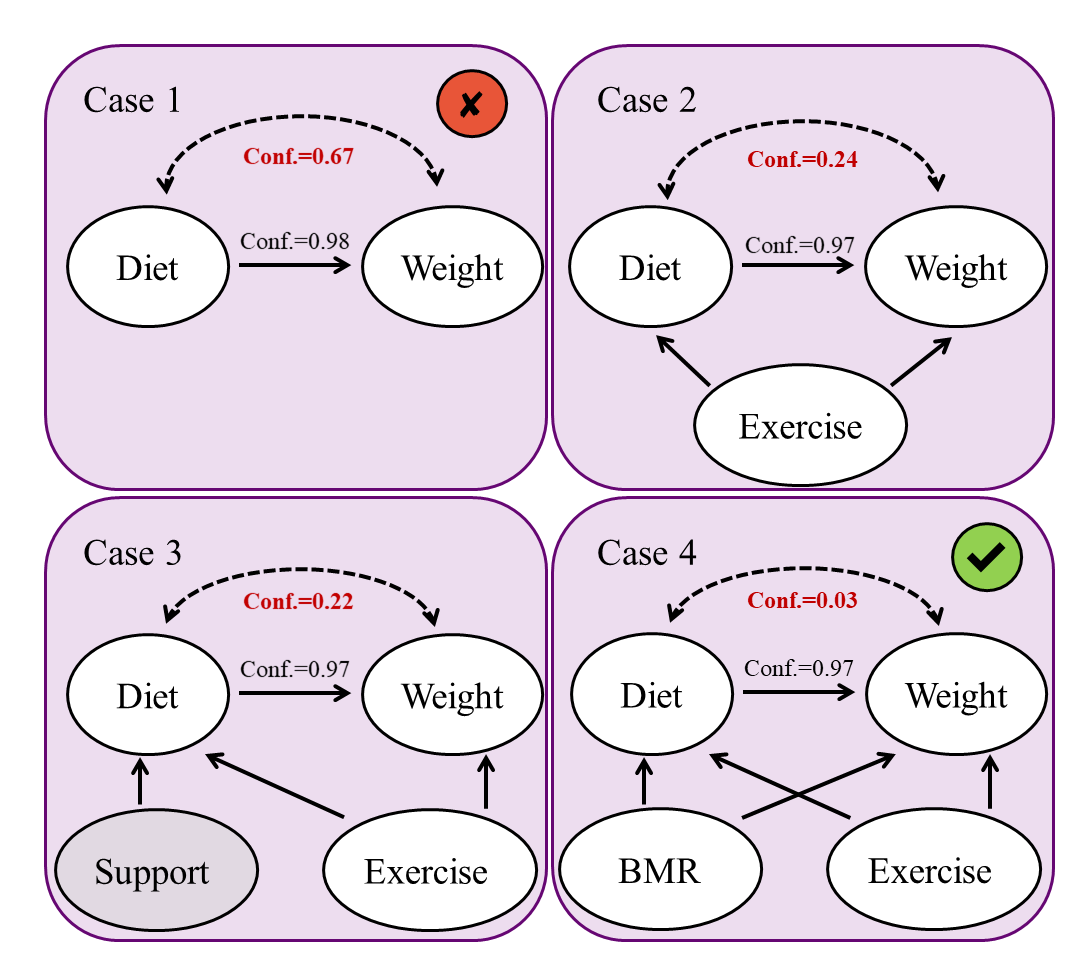}}
    \caption{Confounder screening for the effect of diet on weight using Tree-Query. As additional variables are introduced, the confounding confidence (dashed arrows) decreases while the causal direction (diet $\rightarrow$ weight) remains stable.}
    \label{case_demo}
  \end{center}
\end{figure}

\textbf{Case 1: Only diet and weight.}\\
With $\{\texttt{diet},\texttt{weight}\}$, Tree-Query infers \texttt{diet} $\rightarrow$ \texttt{weight} with high confidence (0.98), but also reports substantial confounding (0.67), indicating a strong causal signal that may still be biased by omitted factors.

\textbf{Case 2: Adding exercise.}\\
With $\{\texttt{diet},\texttt{weight},\texttt{exercise}\}$, the direction \texttt{diet} $\rightarrow$ \texttt{weight} remains confident (0.97), while confounding confidence drops to about 0.24, suggesting that exercise explains a large portion of the previous bias.

\textbf{Case 3: Adding family support.}\\
Extending to the variable set \{\texttt{diet}, \texttt{weight}, \texttt{exercise}, \texttt{family support}\} 
leaves the causal direction unchanged (0.97), with only a small further reduction in confounding (to 0.22). 
Family support is effectively identified as weakly informative for this relation.

\textbf{Case 4: Adding exercise and BMR.}\\
For $\{\texttt{diet},\texttt{weight},\texttt{exercise},\texttt{BMR}\}$, Tree-Query finds clear non-independence (conf.=1.0) and a stable direction \texttt{diet} $\rightarrow$ \texttt{weight} (conf.=0.97), while confounding probability drops to about 0.03. Joint adjustment for exercise and BMR effectively removes remaining confounding pathways.

\textbf{Summary.}\\
Across these stages, Tree-Query separates genuine causal effects from spurious associations: exercise emerges as a key confounder, family support adds little, and incorporating both exercise and BMR yields a stable, high-confidence estimate of the causal impact of diet on weight.

\section{Conclusion}
This work presented Tree-Query, a multi-expert LLM framework that turns pairwise causal discovery into a short, transparent sequence of queries with robustness-aware confidence scores. Coupled with the Adversarial Confidence Estimator and a modular Multi-Expert System, Tree-Query provides interpretable local judgments, theoretical guarantees on identifiability, and improved structural accuracy on data-free benchmarks with and without latent confounders. A qualitative case study on diet and weight further illustrates how the framework screens confounders and converges to stable, high-confidence causal conclusions. 

While our approach operates in a data-free setting, we do not intend it as a replacement for data-driven causal discovery. Instead, we view Tree-Query as a principled, interpretable source of causal priors for practitioners, helping to shape hypotheses, guide experimental design, and warm-start downstream structure-learning methods that operate on observational or interventional data. Exploring such hybrid pipelines that combine LLM-derived priors with statistical evidence is an important direction for future work.

\section*{Impact Statement}

While the Tree-Query framework is theoretically grounded, its motivation extends beyond algorithmic optimization. The framework is designed to support the development of an interpretable causal reasoning system that enables researchers—particularly those without extensive backgrounds in causal inference—to understand, visualize, and evaluate causal hypotheses intuitively.

In many experimental sciences, verifying causal relationships through controlled experiments is costly and often infeasible. Tree-Query offers a pre-screening and reasoning mechanism that assists experimental researchers in identifying the most promising causal directions, thereby reducing redundant experimental efforts and associated costs.

By abstracting causal identification into a transparent tree-like structure, Tree-Query lowers the entry barrier for causal reasoning and facilitates broader participation across the scientific community.

The complete Tree-Query implementation has been open-sourced to encourage transparency and community-driven validation. The repository is available at \url{https://anonymous.4open.science/r/Repo-9B3E-4F96/}
, providing researchers with practical tools to test and refine their own causal hypotheses. This open causal ecosystem aims to promote the co-evolution of theoretical advancement and empirical verification.

\bibliography{example_paper}
\bibliographystyle{icml2026}

\newpage
\appendix
\onecolumn

\section{Appendix: DecisionRule Algorithm}
\label{DecisionRule1}

\begin{algorithm}[H]
\caption{DecisionRule / Resolve-Causal-Relation}
\label{alg:decision_rule}
\begin{algorithmic}[1]
  \STATE \textbf{Input:} 
         backdoor flag $\textit{have\_backdoor}$, backdoor confidence $c_{\mathrm{bd}}$, 
         result list $\textit{results}$, threshold $\tau$
  \STATE \textbf{Output:} final relation $\hat{R}$ and confidence $\hat{c}$
  \STATE $\textit{backdoor\_confident} \gets (c_{\mathrm{bd}} \ge \tau)$
  \IF{$\textit{backdoor\_confident}$}
    \STATE $\textit{branches} \gets 
      \begin{cases}
        \{\text{``after\_block''}\}, & \text{if } \textit{have\_backdoor} \\
        \{\text{``no\_backdoor''}\}, & \text{otherwise}
      \end{cases}$
  \ELSE
    \STATE $\textit{branches} \gets \{\text{``after\_block''}, \text{``no\_backdoor''}\}$
  \ENDIF

  \STATE $\mathcal{R} \gets \emptyset$

  \FOR{each $b \in \textit{branches}$}
    \STATE $\textit{B} \gets \{ r \in \textit{results} : r.\texttt{branch\_name} = b \}$
    \STATE $\textit{Ind} \gets \{ r \in \textit{B} : r.\texttt{relation} \in \{\text{independent}, \text{not independent}\} \}$
    \STATE $\textit{Lat} \gets \{ r \in \textit{B} : r.\texttt{relation} \in \{\leftrightarrow, \text{not } \leftrightarrow\} \}$
    \STATE $\textit{Dir} \gets \{ r \in \textit{B} : r.\texttt{relation} \in \{\rightarrow, \leftarrow, \text{not } \rightarrow, \text{not } \leftarrow\} \}$

    \IF{$|\textit{Ind}| > 0$}
      \STATE $r_{\mathrm{ind}} \gets \textsc{First}(\textit{Ind})$; 
             $\mathcal{R} \gets \mathcal{R} \cup \{r_{\mathrm{ind}}\}$
      \IF{$r_{\mathrm{ind}}.\texttt{relation} = \text{independent}$ 
          \AND $r_{\mathrm{ind}}.\texttt{confidence} \ge \tau$}
        \STATE \textbf{continue}
      \ENDIF
    \ENDIF

    \IF{$|\textit{Lat}| > 0$}
      \STATE $r_{\mathrm{lat}} \gets \textsc{First}(\textit{Lat})$; 
             $\mathcal{R} \gets \mathcal{R} \cup \{r_{\mathrm{lat}}\}$
      \IF{$r_{\mathrm{lat}}.\texttt{relation} = \leftrightarrow$ 
          \AND $r_{\mathrm{lat}}.\texttt{confidence} \ge \tau$}
        \STATE \textbf{continue}
      \ENDIF
    \ENDIF

    \STATE $\textit{PosDir} \gets \{ r \in \textit{Dir} : r.\texttt{relation} \in \{\rightarrow, \leftarrow\} \}$
    \IF{$|\textit{PosDir}| > 0$}
      \STATE $\mathcal{R} \gets \mathcal{R} \cup \textit{PosDir}$
    \ELSE
      \STATE $r_{\text{not}\rightarrow} \gets$ element of $\textit{Dir}$ with relation ``not $\rightarrow$'' (if any)
      \STATE $r_{\text{not}\leftarrow} \gets$ element of $\textit{Dir}$ with relation ``not $\leftarrow$'' (if any)
      \IF{$r_{\text{not}\rightarrow}$ and $r_{\text{not}\leftarrow}$ exist}
        \IF{$r_{\text{not}\rightarrow}.\texttt{confidence} 
            \le r_{\text{not}\leftarrow}.\texttt{confidence}$}
          \STATE construct $r^\ast$ with 
                 $r^\ast.\texttt{relation} \gets \rightarrow$, 
                 $r^\ast.\texttt{confidence} \gets 
                   1 - r_{\text{not}\rightarrow}.\texttt{confidence}$
        \ELSE
          \STATE construct $r^\ast$ with 
                 $r^\ast.\texttt{relation} \gets \leftarrow$, 
                 $r^\ast.\texttt{confidence} \gets 
                   1 - r_{\text{not}\leftarrow}.\texttt{confidence}$
        \ENDIF
        \STATE $\mathcal{R} \gets \mathcal{R} \cup \{r^\ast\}$
      \ENDIF
    \ENDIF
  \ENDFOR

  \STATE $\textit{VALID} \gets \{\text{independent}, \leftrightarrow, \rightarrow, \leftarrow\}$
  \STATE $\textit{ValidResults} \gets \{ r \in \mathcal{R} : r.\texttt{relation} \in \textit{VALID} \}$

  \IF{$|\textit{ValidResults}| = 0$}
    \STATE \textbf{return} $(\hat{R} = \text{unknown}, \hat{c} = 0.0)$
  \ENDIF

  \STATE $r^{\star} \gets \arg\max_{r \in \textit{ValidResults}} r.\texttt{confidence}$
  \STATE \textbf{return} $(\hat{R} = r^{\star}.\texttt{relation}, \hat{c} = r^{\star}.\texttt{confidence})$

\end{algorithmic}
\end{algorithm}

\section{Appendix: Multi-Expert System and Adversarial Confidence Estimator}
\label{appendix:mes_ace}

\begin{algorithm}[H]
\caption{Multi-Expert System with Adversarial Confidence Estimation}
\label{alg:mes_ace_combined}
\begin{algorithmic}[1]
  \STATE \textbf{Input:} query type $Q$, variables $(X_1, X_2)$, variable set $V$, 
         total experts $K$, selected experts $m$ ($m \le K$), 
         sample size $N$, personas $\mathcal{P}=\{P_1,\dots,P_n\}$, 
         persona weights $\lambda_1,\dots,\lambda_n$ (default: $\lambda_j = 1/n$)
  \STATE \textbf{Output:} binary label $\hat{y} \in \{0,1\}$, confidence $c \in [0,1]$, expert log $\mathcal{R}$
  \STATE
  \STATE \textbf{// PART A: Multi-Expert System (MES)}
  \STATE $\mathcal{E}_{\textnormal{base}} \gets \textsc{RoutingRules}(Q)$
  \STATE $\mathcal{E} \gets \textsc{ClinicSelect}(Q, X_1, X_2, V, \mathcal{E}_{\textnormal{base}}, m)$
  \STATE $\mathcal{R} \gets \emptyset$
  \FOR{each expert $e \in \mathcal{E}$}
    \STATE $\ell \gets \textsc{QueryExpert}(e, Q, X_1, X_2, V)$
    \STATE $\mathcal{R} \gets \mathcal{R} \cup \{(e,\ell)\}$
  \ENDFOR
  \STATE $\hat{y} \gets \textsc{MajorityVote}(\mathcal{R})$
  \STATE
  \STATE \textbf{// PART B: Adversarial Confidence Estimator (ACE)}
  \STATE \textbf{// Step B1: Construct natural-language query}
  \STATE $x_{\textnormal{q}} \gets \textsc{FormatQuery}(Q, X_1, X_2, V)$
  \STATE Let $MES^\ast$ denote calling $MES$ with experts fixed to $\mathcal{R}$
  \STATE \textbf{// Step B2: Collect original answers ($N$ runs)}
  \FOR{$i=1$ \textbf{to} $N$}
    \STATE $x_{\textnormal{ans}}^{(i)} \gets MES^\ast(x_{\textnormal{q}})$
    \STATE $x_{\textnormal{concl}}^{(i)} \gets \textsc{Extract}(x_{\textnormal{ans}}^{(i)})$
  \ENDFOR
  \STATE \textbf{// Step B3: Compute baseline consensus}
  \STATE $N_{\textnormal{yes}} \gets |\{i : x_{\textnormal{concl}}^{(i)} = \text{``Yes''}\}|$; $N_{\textnormal{no}} \gets |\{i : x_{\textnormal{concl}}^{(i)} = \text{``No''}\}|$
  \STATE $p_0^{\textnormal{raw}} \gets \frac{\max(N_{\textnormal{yes}}, N_{\textnormal{no}})}{N}$
  \STATE $p_0 \gets 2p_0^{\textnormal{raw}} - 1$
  \STATE $y^\ast \gets \arg\max_{y \in \{\text{``Yes''}, \text{``No''}\}} |\{i : x_{\textnormal{concl}}^{(i)} = y\}|$
  \STATE \textbf{// Step B4: Adversarial re-query for each persona}
  \FOR{$j=1$ \textbf{to} $n$}
    \FOR{$i=1$ \textbf{to} $N$}
      \STATE $x_{\textnormal{adv},j}^{(i)} \gets \textsc{AdvGen}(P_j, x_{\textnormal{q}}, x_{\textnormal{ans}}^{(i)})$
      \STATE ${x'}_{\textnormal{ans},j}^{(i)} \gets MES^\ast(x_{\textnormal{q}}, x_{\textnormal{ans}}^{(i)}, x_{\textnormal{adv},j}^{(i)})$
      \STATE ${x'}_{\textnormal{concl},j}^{(i)} \gets \textsc{Extract}({x'}_{\textnormal{ans},j}^{(i)})$
    \ENDFOR
    \STATE $p_j \gets \frac{1}{N}\sum_{i=1}^N \mathbbm{1}({x'}_{\textnormal{concl},j}^{(i)}=y^\ast)$
  \ENDFOR
  \STATE \textbf{// Step B5: Confidence aggregation}
  \STATE $c \gets p_0 \Bigl(1 - \sum_{j=1}^n \lambda_j \frac{|p_j-p_0^{\textnormal{raw}}|}{p_0^{\textnormal{raw}}}\Bigr)$
  \STATE
  \STATE \textbf{return} $(\hat{y}, c, \mathcal{R})$
\end{algorithmic}
\end{algorithm}

\section{Appendix: Expert types and prompt templates}
\label{expert_prompt}

\begin{table*}[h]
\centering
\caption{Example expert types used in the Multi-Expert System.  
The list is not exhaustive; it illustrates several possible causal reasoning perspectives.}
\label{tab:experts}
\begin{tabular}{ll}
\toprule
\textbf{Expert Name} & \textbf{Description} \\
\midrule
Structural Causal Expert & Focuses on SCM semantics and graph-level causal relations. \\
Statistical Independence Expert & Emphasizes conditional independence and probabilistic constraints. \\
Intervention Reasoning Expert & Uses do-interventions and counterfactual perturbations. \\
Backdoor Criterion Expert & Specializes in identifying valid adjustment sets. \\
Causal Direction Expert & Evaluates directional relations and v-structure patterns. \\
Path Analysis Expert & Examines all directed and undirected paths between variables. \\
Graph Transformation Expert & Applies graph surgery, removal, and modification heuristics. \\
\bottomrule
\end{tabular}
\end{table*}

This appendix provides the complete textual templates used by each expert in the Multi-Expert System for Causal Judgment. For every expert listed in Table~\ref{tab:experts}, we include:

\begin{itemize}
    \item the full prompt template used to instruct the expert,
    \item formatting conventions for variables and causal structures,
    \item an illustrative input example,
    \item and example outputs for both possible labels (``Yes'' or ``No'').
\end{itemize}

These templates ensure that experts---although operating under heterogeneous causal perspectives---produce comparable binary decisions suitable for aggregation within the Multi-Expert System.

\subsection{General Formatting Rules}

All experts receive a base query generated by the function \texttt{BuildExpertPrompt(Q, X1, X2, V, ei)}, which creates a specialized prompt based on the input variables and the expert type:

\begin{verbatim}
BuildExpertPrompt(Q, X1, X2, V, ei)
\end{verbatim}

Where:

\begin{itemize}
    \item \texttt{Q} is the causal task-specific question involving the variables \texttt{X1} and \texttt{X2}.
    \item \texttt{X1} and \texttt{X2} are the two variables under analysis.
    \item \texttt{V} is a list of all relevant variables in the causal context.
    \item \texttt{ei} specifies the expert type (e.g., Graph-Theory Expert, Statistical Expert, etc.).
\end{itemize}

The resulting prompt follows the structure:

\begin{verbatim}
In causal inference, consider the following variables: {V}
{Q}
Let us think step by step, and then output directly Yes or No.
\end{verbatim}

All experts must output \textbf{only}:

\begin{verbatim}
Yes
\end{verbatim}

or

\begin{verbatim}
No
\end{verbatim}

No explanation, probability, or justification is included in the expert output.

Below are the complete templates generated using \texttt{BuildExpertPrompt(Q, X1, X2, V, ei)} for each expert.

\subsubsection{Graph-Theory Expert}

\textbf{Prompt Template}

\begin{verbatim}
As a causal graph theory expert, follow the framework below:

Specialization: causal graph structure, d-separation, path blocking.
Strengths: path analysis, cycle detection, d-separation evaluation.
Reasoning style: structured graph traversal.
Output requirement: binary decision based solely on graph structure.

Steps:
1. Construct the causal graph and identify all possible paths.
2. Apply d-separation to analyze path blocking.
3. Examine backdoor, frontdoor, and confounding paths.
4. Produce a clear Yes/No judgment based on graph structure.

Please follow graph-theoretic principles strictly and output only Yes or No.

{base_prompt}
\end{verbatim}

---

\subsubsection{Statistical Expert}

\textbf{Prompt Template}

\begin{verbatim}
As a statistical inference expert, follow the framework below:

Specialization: statistical testing, independence assessment.
Strengths: independence tests, correlation analysis, confounder detection.
Reasoning style: probabilistic-statistical reasoning.
Output requirement: binary decision based on statistical evidence.

Steps:
1. Assess statistical correlation between variables.
2. Consider conditional independence and confounding factors.
3. Analyze statistical significance and robustness.
4. Produce a clear Yes/No judgment based on statistical reasoning.

Please apply statistical principles rigorously and output only Yes or No.

{base_prompt}
\end{verbatim}

---

\subsubsection{Domain-Knowledge Expert}

\textbf{Prompt Template}

\begin{verbatim}
As a domain knowledge expert, follow the reasoning framework below:

Specialization: real-world scientific and commonsense causal reasoning.
Strengths: mechanism analysis, temporal logic, physical/social constraints.
Reasoning style: evidence-based inductive reasoning.
Output requirement: binary judgment based on domain knowledge.

Steps:
1. Apply relevant scientific knowledge and commonsense reasoning.
2. Evaluate physical/biological/social mechanism plausibility.
3. Assess temporal order and real-world feasibility.
4. Produce a clear Yes/No judgment grounded in domain knowledge.

Please integrate real-world understanding and output only Yes or No.

{base_prompt}
\end{verbatim}

---

\subsubsection{Counterfactual Expert}

\textbf{Prompt Template}

\begin{verbatim}
As a counterfactual inference expert, follow the framework below:

Specialization: intervention analysis, potential outcomes, do-operator.
Strengths: counterfactual simulation and causal effect evaluation.
Reasoning style: counterfactual thought experiments.
Output requirement: binary judgment based on interventional reasoning.

Steps:
1. Construct the intervention scenario (do-operation).
2. Compare actual outcome with counterfactual outcome.
3. Analyze the distribution of potential outcomes.
4. Produce a clear Yes/No judgment based on intervention effects.

Please use counterfactual reasoning strictly and output only Yes or No.

{base_prompt}
\end{verbatim}

---

\subsubsection{Temporal-Dynamics Expert}

\textbf{Prompt Template}

\begin{verbatim}
As a temporal dynamics expert, follow the framework below:

Specialization: temporal order, dynamic processes.
Strengths: time-series reasoning, lag effects, dynamic causal structures.
Reasoning style: temporal causal analysis.
Output requirement: binary judgment based on temporal constraints.

Steps:
1. Check strict temporal ordering between cause and effect.
2. Analyze lagged effects and dynamic processes.
3. Evaluate time-series causal structures.
4. Produce a Yes/No judgment based on temporal logic.

Please focus on the temporal dimension and output only Yes or No.

{base_prompt}
\end{verbatim}

---

\subsubsection{Mechanism-Modeling Expert}

\textbf{Prompt Template}

\begin{verbatim}
As a mechanism modeling expert, follow the framework below:

Specialization: interpretable causal mechanisms and mediation.
Strengths: mediator analysis, mechanism consistency, functional relations.
Reasoning style: mechanism decomposition.
Output requirement: binary judgment based on mechanism completeness.

Steps:
1. Identify possible mediators and intermediate mechanisms.
2. Examine completeness of the causal chain.
3. Evaluate the plausibility of mechanism pathways.
4. Produce a clear Yes/No judgment based on mechanism reasoning.

Please focus on mechanism-level analysis and output only Yes or No.

{base_prompt}
\end{verbatim}

---

\subsubsection{Robustness-Analysis Expert}

\textbf{Prompt Template}

\begin{verbatim}
As a robustness analysis expert, follow the framework below:

Specialization: robustness and sensitivity of causal claims.
Strengths: sensitivity tests, boundary-case evaluation, robustness validation.
Reasoning style: multi-scenario comparison.
Output requirement: binary judgment based on robustness assessment.

Steps:
1. Test result stability under varying assumptions.
2. Perform sensitivity and edge-case analysis.
3. Evaluate robustness of causal conclusions.
4. Produce a Yes/No judgment based on robustness.

Please evaluate the reliability of the conclusion and output only Yes or No.

{base_prompt}
\end{verbatim}

---

\subsection{Example Input Format}

\begin{verbatim}
Variables: ["Ice Cream Sales", "Drowning Incidents", "Temperature"]
Task: Determine whether a backdoor path exists between
       Ice Cream Sales and Drowning Incidents.
\end{verbatim}

The system constructs:

\begin{itemize}
    \item a base prompt,
    \item an expert-specific prompt for each expert,
    \item and each expert outputs Yes/No.
\end{itemize}

---

\subsection{Example Output Format}

Each expert returns:

\begin{verbatim}
Yes
\end{verbatim}

or

\begin{verbatim}
No
\end{verbatim}

When majority voting is used, the Multi-Expert System aggregates \textbf{only labels} (not probabilities).

\section{Appendix: Proof of Theorem~\ref{thm:asym_identifiability}}
\label{appendix:proof_asym_identifiability}

\begin{proof}

\textbf{Step 1: Reliability of each decision query.}
Each Tree-Query decision query (\texttt{backdoor\_path}, \texttt{independence}, \texttt{latent\_confounder}, or \texttt{causal\_direction}) aggregates $m$ independent expert votes by majority rule.  
Let $X_i$ denote the indicator variable that expert $i$ outputs the correct label, with $\mathbb{E}[X_i] \ge 1 - \alpha$.  
The probability that the aggregated majority decision is wrong is
\[
P(\textnormal{step error}) = P\!\left(\sum_{i=1}^m X_i < \frac{m}{2}\right)
   = P\!\left(\sum_{i=1}^m X_i - m(1-\alpha) < -m(0.5-\alpha)\right).
\]
Applying Hoeffding’s inequality yields
\[
P(\textnormal{step error}) \le \exp[-2m(0.5-\alpha)^2].
\]
Denote this bound as $\varepsilon_k$.

\textbf{Step 2: Overall Tree-Query reliability.}
Tree-Query consists of $M$ deterministic decision steps.  
By the union bound, the probability that any of them errs satisfies
\[
P(\textnormal{overall error}) \le M\,\varepsilon_m
    = M\,\exp[-2k(0.5-\alpha)^2].
\]
Hence the probability that the entire Tree-Query outputs the correct causal relation is
\[
P_{\textnormal{correct}} \ge 1 - M\,\exp[-2m(0.5-\alpha)^2].
\]

\textbf{Step 3: Correctness for each causal relation type.}
For any variable pair $(X_1,X_2)$, let the true relation be
\[
R \in \{\,X_1 \perp X_2,\; X_1 \leftrightarrow X_2,\;
              X_1 \rightarrow X_2,\; X_2 \rightarrow X_1\,\}.
\]
Tree-Query identifies $R$ through a deterministic combination of these querys:
\begin{itemize}
    \item $X_1 \perp X_2$: correctness requires the independence query to be correct,
    giving $P_{\textnormal{correct}}(R) \ge 1-\varepsilon_k$.
    \item $X_1 \leftrightarrow X_2$: requires independence and latent-confounder querys
    both correct, giving $P_{\textnormal{correct}}(R) \ge 1-2\varepsilon_k$.
    \item $X_1 \rightarrow X_2$ or $X_2 \rightarrow X_1$: requires backdoor,
    independence, latent, and direction querys all correct, yielding
    $P_{\textnormal{correct}}(R) \ge 1-4\varepsilon_k$.
\end{itemize}
Since $m$ upper-bounds the total number of potential failure points, we have the general bound
\[
P_{\textnormal{correct}}(R) \ge 1 - M\,\varepsilon_m
   = 1 - M\,e^{-2m(0.5-\alpha)^2}.
\]

\textbf{Step 4: Asymptotic identifiability.}
Because $\alpha < 0.5$, we have $(0.5-\alpha) > 0$, hence
\[
\lim_{m\to\infty} e^{-2m(0.5-\alpha)^2} = 0
   \quad\Rightarrow\quad
   \lim_{m\to\infty} P_{\textnormal{correct}}(R) = 1.
\]
Similarly, for any finite $m$, as $\alpha \to 0$ we also obtain
$P_{\textnormal{correct}}(R) \to 1$.
Therefore, Tree-Query is \textbf{asymptotically identifiable}: each causal relation type
($\rightarrow, \leftarrow, \leftrightarrow, \perp$) can be correctly detected with probability
approaching one as the number of experts increases or as individual expert reliability improves.
\end{proof}

\section{Appendix: Proof of Proposition~\ref{thm1}.}
\label{appendix:Prop1}

\begin{proof}
Since Tree-Query makes independent judgments for each potential edge:
\[
E_{\textnormal{TQ}} = \sum_{i=1}^e P(\textnormal{edge } i \textnormal{ correct}) = e \cdot P_{\textnormal{pair}}
\]
Substituting the lower bound from Theorem 1 yields the result.
\end{proof}

\section{Appendix: Proof of Proposition~\ref{prop2}.}
\label{appendix:Prop2}
\begin{proof}
From Theorem~\ref{thm:asym_identifiability}:
\[
P_{\textnormal{pair}} \geq 1 - M \cdot \exp\left(-2m(0.5-\alpha)^2\right) \geq P_{\textnormal{target}}
\]
Rearranging gives the stated inequality.
\end{proof}

\section{Appendix: Prompt templates for ACE}
\label{app:ace_prompts}

For the space of all finite text sequences $\mathcal{X}$, let $x_{\textnormal{q}} \in \mathcal{X}$ be the prompt question. The LLM is guided to output both a binary answer and its reasoning:

\begin{center}
  \fbox{
    \begin{minipage}{0.9\linewidth}
      Question: $x_{\textnormal{q}}$ \\
      Think through this step by step, then answer Yes or No with very detailed reasoning like \texttt{"Answer: 'Yes'/'No'\textbackslash n\textbackslash nReasoning:\textbackslash n'Your reasoning here'"}.

      \noindent\makebox[\linewidth]{\dotfill}
      \vspace{-0.5em}
      
      $\left.\begin{minipage}[c]{0.85\linewidth}
        Answer: $x_{\textnormal{concl}}$ \\
        Reasoning: Detailed reasoning steps...
      \end{minipage}\right\} \; x_{\textnormal{ans}}$
    \end{minipage}
  }
\end{center}

To generate adversarial agents $x_{\textnormal{adv}} \in \mathcal{X}$ that support the opposite conclusion, the following template is used:
\[
\{x_{\textnormal{adv. concl}}\} = \{\textnormal{``Yes''}, \textnormal{``No''}\} \backslash \{x_{\textnormal{concl}}\}.
\]

\begin{center}
  \fbox{
    \begin{minipage}{0.9\linewidth}
      Original Question: $x_{\textnormal{q}}$

      Original Answer: $x_{\textnormal{ans}}$

      Original Conclusion: $x_{\textnormal{concl}}$

      Your task: Argue that the correct answer should be ``$x_{\textnormal{adv. concl}}$''.

      \noindent\makebox[\linewidth]{\dotfill}
      \vspace{-0.5em}
      
      $\left.\begin{minipage}[c]{0.85\linewidth}
        Adversarial arguments...
      \end{minipage}\right\} \; x_{\textnormal{adv}}$
    \end{minipage}
  }
\end{center}

The original answer and adversarial arguments are then fed back to the LLM for reassessment:

\begin{center}
  \fbox{
    \begin{minipage}{0.9\linewidth}
      Question: $x_{\textnormal{q}}$

      Your previous answer was: $x_{\textnormal{ans}}$

      Consider the following statement: $x_{\textnormal{adv}}$

      Now consider the question again carefully. What is your opinion now? Provide a clear Yes or No with very detailed reasoning like \texttt{"Answer: 'Yes'/'No'\textbackslash n\textbackslash nReasoning:\textbackslash n'Your reasoning here'"}.

      \noindent\makebox[\linewidth]{\dotfill}
      \vspace{-0.5em}
      
      $\left.\begin{minipage}[c]{0.85\linewidth}
        Answer: $x'_{\textnormal{concl}}$ \\
        Reasoning: Detailed reasoning steps...
      \end{minipage}\right\} \; x'_{\textnormal{ans}}$
    \end{minipage}
  }
\end{center}

In the main experiments, $n = 3$ adversarial personas are instantiated:
\begin{enumerate}
    \item \textbf{Contrarian:} systematically refutes the original reasoning through counterarguments and alternative interpretations.
    \item \textbf{Deceiver:} constructs opposing arguments with citation-based rhetorical strategies and empirical references, including fabricated resources if necessary.
    \item \textbf{Hater:} challenges answer credibility through affectively charged language and conviction-based discourse.
\end{enumerate}

\section{Appendix: Proof of Proposition~\ref{prop:confidence_robustness}.}
\label{proof:confi}

Recall that $y^*$ is the majority conclusion of the original answers. The empirical distribution of the original answers, $\hat{\mu}_{\textnormal{ans}}$, is defined by $\hat{\mu}_{\textnormal{ans}}(y^*) = p_0^{\textnormal{raw}}$ and $\hat{\mu}_{\textnormal{ans}}(\bar{y}^*) = 1 - p_0^{\textnormal{raw}}$, where $\bar{y}^*$ denotes the complement of $y^*$ in $\mathcal{Y}$. Similarly, for the $j$-th adversarial agent, the empirical distribution $\hat{\mu}_{\textnormal{adv}, j}$ is given by $\hat{\mu}_{\textnormal{adv}, j}(y^*) = p_j$ and $\hat{\mu}_{\textnormal{adv}, j}(\bar{y}^*) = 1 - p_j$, as defined in Eq.~(\ref{consistency_prob}).

The Wasserstein-1 distance between two probability measures $\mu$ and $\nu$ on the metric space $(\mathcal{Y}, d)$ is defined as:
\begin{equation}
W_1(\mu, \nu) = \inf_{\gamma \in \Gamma(\mu, \nu)} \mathbb{E}_{(u, v) \sim \gamma} [d(u, v)],
\end{equation}
where $\Gamma(\mu, \nu)$ is the set of all joint distributions (couplings) on $\mathcal{Y} \times \mathcal{Y}$ with marginals $\mu$ and $\nu$. For the discrete metric $d(u, v) = \mathbbm{1}(u \neq v)$, the cost function is non-zero only when $u \neq v$.

For the binary space $\mathcal{Y} = \{y^*, \bar{y}^*\}$, any coupling $\gamma$ is determined by $\gamma(y^*, y^*)$, since the marginal constraints imply:
\begin{align*}
\gamma(y^*, \bar{y}^*) &= \mu(y^*) - \gamma(y^*, y^*) \\
\gamma(\bar{y}^*, y^*) &= \nu(y^*) - \gamma(y^*, y^*)
\end{align*}
The objective function to minimize is:
\begin{align*}
\mathbb{E}[d(u, v)] &= \gamma(y^*, \bar{y}^*) + \gamma(\bar{y}^*, y^*) \\
&= (\mu(y^*) + \nu(y^*)) - 2\gamma(y^*, y^*).
\end{align*}
To minimize this quantity, we must maximize $\gamma(y^*, y^*)$ subject to $\gamma(y^*, y^*) \leq \min(\mu(y^*), \nu(y^*))$. Setting $\gamma(y^*, y^*) = \min(\mu(y^*), \nu(y^*))$, we obtain:
\begin{align*}
W_1(\mu, \nu) &= \mu(y^*) + \nu(y^*) - 2\min(\mu(y^*), \nu(y^*)) \\
&= |\mu(y^*) - \nu(y^*)|.
\end{align*}
Substituting the empirical measures $\hat{\mu}_{\textnormal{ans}}$ and $\hat{\mu}_{\textnormal{adv}, j}$:
\begin{equation}
W_1(\hat{\mu}_{\textnormal{ans}}, \hat{\mu}_{\textnormal{adv}, j}) = |\hat{\mu}_{\textnormal{ans}}(y^*) - \hat{\mu}_{\textnormal{adv}, j}(y^*)| = |p_0^{\textnormal{raw}} - p_j|.
\end{equation}
This demonstrates that the term $|p_j - p_0^{\textnormal{raw}}|$ precisely quantifies the distributional shift between the original and adversarial answer distributions under the Wasserstein-1 metric.

Substituting this into Eq.~(\ref{eq:confidence_score}), the confidence score becomes:
\begin{equation}
c = p_0 \cdot \left(1 - \sum_{j=1}^{n} \lambda_j \cdot \frac{W_1(\hat{\mu}_{\textnormal{ans}}, \hat{\mu}_{\textnormal{adv}, j})}{p_0^{\textnormal{raw}}}\right).
\end{equation}
The term $\sum_{j=1}^{n} \lambda_j W_1(\hat{\mu}_{\textnormal{ans}}, \hat{\mu}_{\textnormal{adv}, j})$ represents the expected distributional shift induced by the adversarial agents. Therefore, the confidence score is a monotonically decreasing function of this shift, effectively measuring the extent to which the original answer distribution is invariant (i.e., not affected) by the adversarial agents.


\end{document}